\newtheorem{theorem}{Theorem}
\newtheorem{prop}{Proposition}
\newtheorem{definition}{Definition}
\newtheorem{rem}{Remark}
\algrenewcommand\algorithmicrequire{\textbf{Precondition:}}
\algrenewcommand\algorithmicensure{\textbf{Postcondition:}}
\title[Automatic deforestation detectors on frequentist statistics]{Automatic deforestation detectors based on frequentist statistics and their extensions for other spatial objects.}
\author[Jesper Muren {\it et al.}]{Jesper Muren, https://orcid.org/0000-0002-9208-5325}
\address{Department of Mathematics, Stockholm University,
Stockholm,
Sweden.}
\author[Vilhelm Niklasson]{Vilhelm Niklasson, https://orcid.org/0000-0001-9228-0369}
\address{Department of Mathematics, Stockholm University,
Stockholm,
Sweden.}
\author[Dmitry Otryakhin]{Dmitry Otryakhin \thanks{Address for correspondence: Dmitry Otryakhin, Department of Mathematics, Stockholm University, Matematiska institutionen, 106 91 Stockholm, Sweden. E-mail: d.otryakhin.acad@protonmail.ch}, https://orcid.org/0000-0002-4700-7221}
\address{Department of Mathematics, Stockholm University,
Stockholm,
Sweden.}
\email{d.otryakhin.acad@protonmail.ch}
\author[Maxim Romashin]{Maxim Romashin, https://orcid.org/0000-0003-2976-3346}
\begin{document}
\maketitle
\begin{abstract}
This paper is devoted to the problem of detection of forest and non-forest areas on Earth images. We propose two statistical methods to tackle this problem: one based on multiple hypothesis testing with parametric distribution families, another one---on non-parametric tests. The parametric approach is novel in the literature and relevant to a larger class of problems---detection of natural objects, as well as anomaly detection. We develop mathematical background for each of the two methods, build self-sufficient detection algorithms using them and discuss practical aspects of their implementation. We also compare our algorithms with those from standard machine learning using satellite data. 
\end{abstract}

\keywords{Sentinel-2, classification algorithms, multiple hypothesis testing.}

\section{Introduction}

Within the last decade, there have been some attempts to create remote deforestation detectors. Contemporary literature on the topic seems to be geared towards machine learning approaches. Just to name a few examples we refer to \cite{Ortega20} that employed deep learning for deforestation detection in the Brazilian Amazon and \cite{Shermeyer15} that presented k-nearest neighbour classification to track deforestation in Peru. Bayesian approach has also seen use, e.g. in \cite{Reiche15} on time series data from Fiji. The present work aims to create methods having statistical foundation, and, unlike machine learning methods, are fast, simple and in principle interpretable while being comparably accurate. The qualities are important on their own, but may also be useful for the task of near-real-time deforestation detection. To achieve these characteristics we employ sample means and covariances and empirical characteristic functions (ECFs) of colour intensities of image samples. The idea behind it is that forests seen from above, like many other natural objects, show rather simple homogeneous structure. Moreover, in many cases natural objects are distinguishable due to their colour features rather than their spatial patterns. Thus, operating on distributions and their ECFs and completely erasing all the spatial dependencies seems to be reasonable.

The general statistical setting of the problem is as follows.
In visible spectrum images of geographical objects are represented as three matrices. Each of them consists of intensities of one of the red, green or blue colours. Thus, every pixel of an Earth image is represented by three numbers: one in each of the matrices. We employ supervised learning where tags (forest / non-forest) are assigned to entries of the training data and learn features of the distributions of the matrices in those entries. During classification, test images are split into much smaller square sub-images for the purpose of obtaining better resolution, and these sub-images are tested against reference images that are known to contain forest.

We intend to investigate two different approaches to the problem: non-parametric multiple hypothesis testing and parametric one. In the case of parametric multiple hypothesis testing, we assume parametric distributions for the colour intensities of pixels and use training data to estimate the parameters of said distributions. New data is then tested against this distribution. In the case of the non-parametric setting, we do not assume any known distribution for the colour intensities but instead test if the multivariate distributions of the colour intensities of the training data and new data have equal means and covariance matrices. 

We build both the parametric and the non-parametric algorithms in such a way that they work on images in visible spectrum, in particular, RGB composites. Thus, the algorithms do not require any certain pre-processing technique for the data and can be applied to a broad range of sources of images of Earth.
There are at least two ways of obtaining large datasets of Earth images: satellite data obtained through open access Earth observation missions, e.g. Landsat (\cite{Landsat8}) and Sentinel-2 (\cite{Sentinel2}) and aerial photo imaging.
In this work, Sentinel-2 images are used because they provide good spatial and temporal resolution. Sentinel-2 mission provides data from visible, near infrared and short-wave infrared sensors in 13 corresponding spectral bands. Although we use RGB composites obtained via a post-processing procedure of Sentinel-2 original tiles, our methods could work on all of the data in those 13 bands. 

\subsection{A practical example}
\label{subsc:PractExampl}
In order to give more intuition of what task we tackle, we present a real-life example of deforestation detection. Figure \ref{fig:Defore_example_a} shows a Sentinel-2 image of a piece of land near the city of Arkhangelsk in Russia suffering from logging. Figure \ref{fig:Defore_example_b} is a black-and-white image representing the detection result. White represents forest and black is used for all types of terrains not covered by it.
\begin{figure}[H]
	\subfloat[Forest image analysed]{\includegraphics[width= 0.48\textwidth]{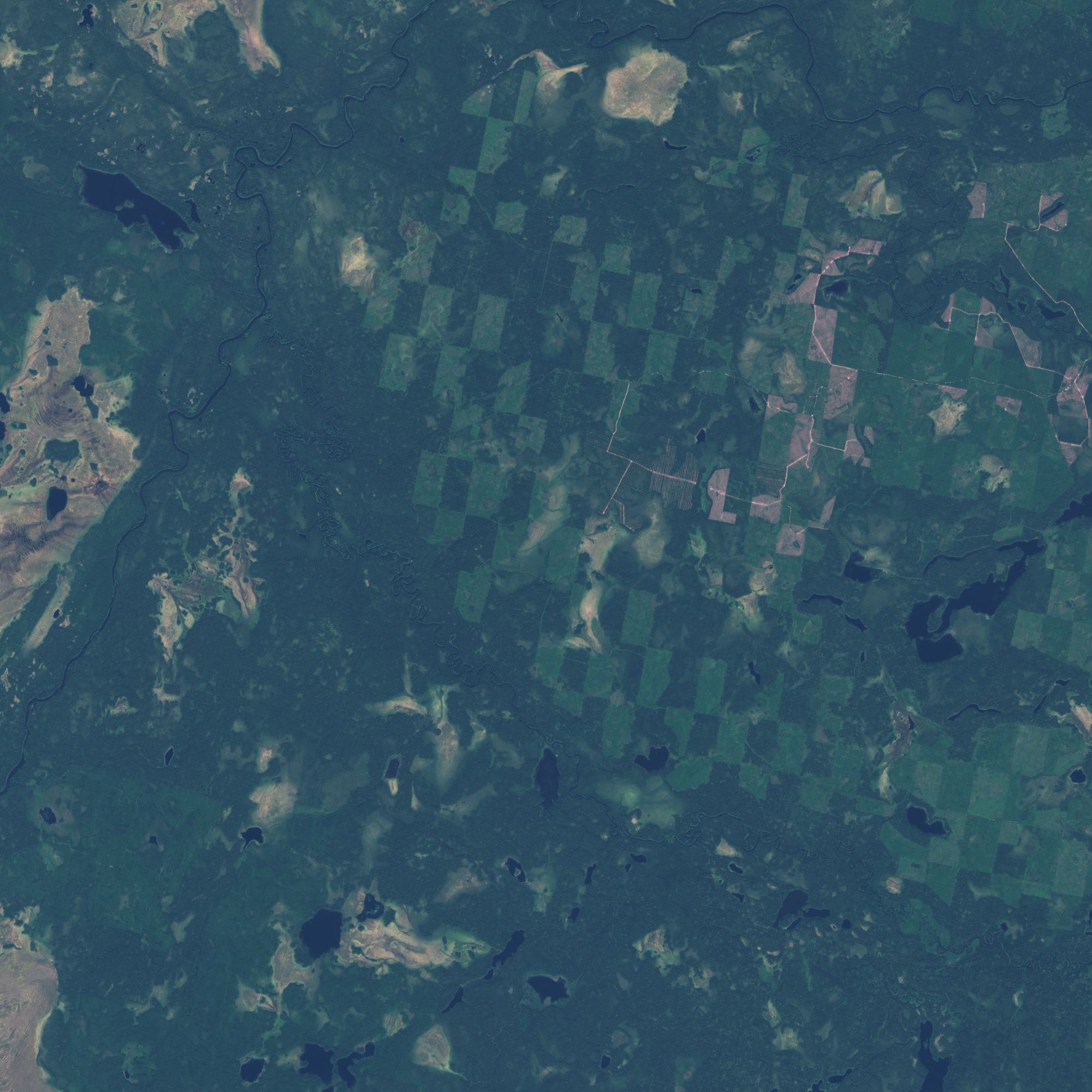} \label{fig:Defore_example_a}}
	\subfloat[The result of detection. White---forest, black---non-forest]{\includegraphics[width= 0.48\textwidth]{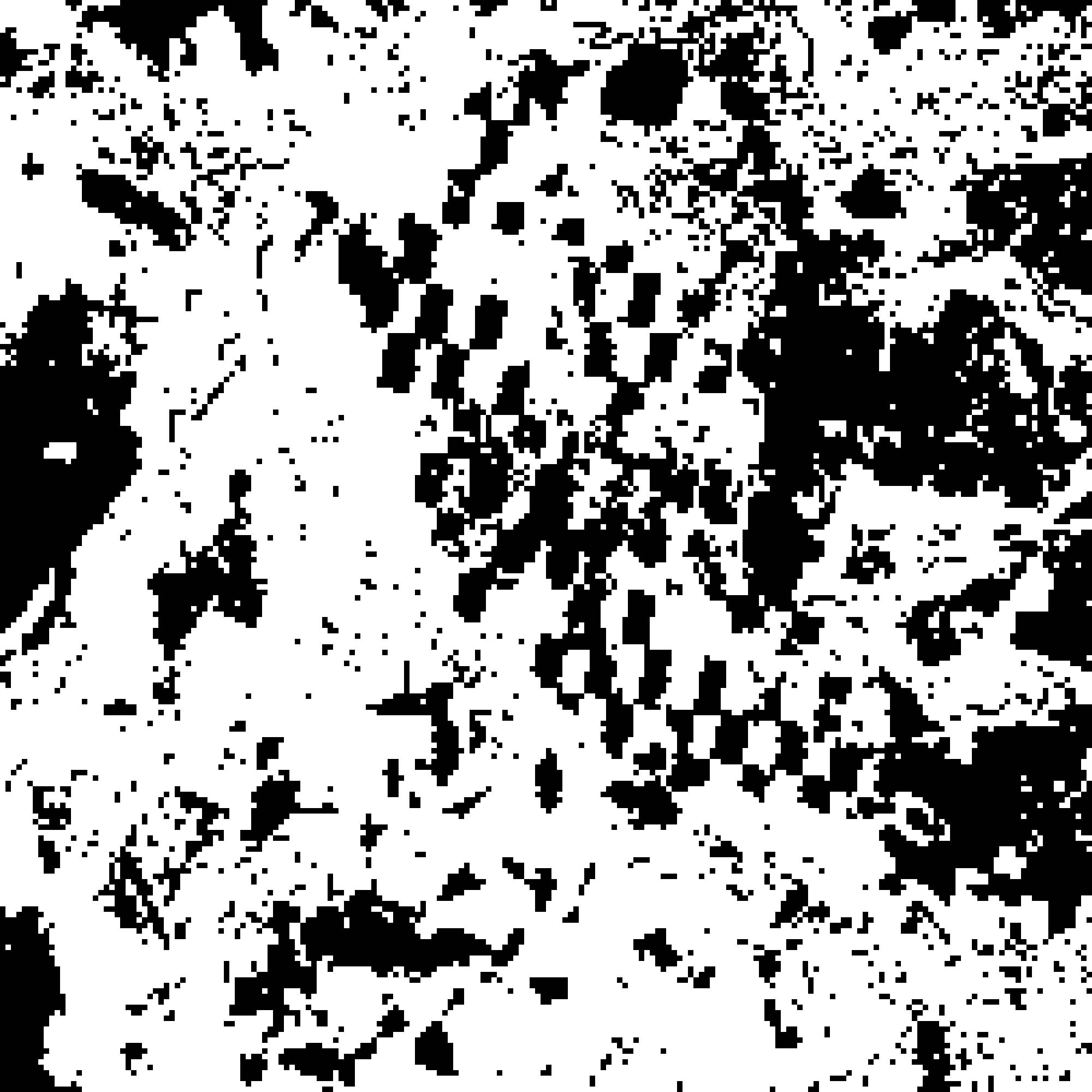} \label{fig:Defore_example_b}}\\
	
	\caption{A plot of land with deforestation and the result of detection produced by the non-parametric algorithm.}
	\label{fig:Defore_example}
\end{figure}

One can see that the detector highlighted the areas with logging traces (black rectangles) as well as natural objects: lakes, bare ground and pieces of rivers.

The rest of the paper is organized as follows. Section \ref{EDA} briefly mentions the key steps in data preparation and discusses what features forest data possess. Sections \ref{Sec:nonPar} and \ref{Sec:param est} describe correspondingly novel non-parametric and parametric methods. Section \ref{Sec:TheorCompar} discusses theoretical properties of the two methods, while the last section compares the two with well-known methods from machine learning using Satellite data.

Our main contribution contains the application of the known theory of Hotelling's $T^2$ statistic to the problem of deforestation detection, which occupies Section \ref{Sec:nonPar}. There, we prove Theorem \ref{thm:NonPar_test} which gives the asymptotic distribution of a distance between two properly scaled multivariate populations. Using Theorem \ref{thm:NonPar_test} we build classification and training algorithms in \ref{subsc:nonpar_hypoth_test} and \ref{subsc:nonpar_train_cv}. Another major contribution is Section \ref{Sec:param est} (excluding \ref{subsc:StableReview}), where we develop a new setting of hypothesis testing for stable models based on empirical characteristic functions in \ref{subsc:Param_MahalaDist} and use this setting to build algorithms for detection of forest in \ref{subsc:ParamHypTest} and \ref{subsc:par_train_cv}.

\section{Data and exploratory analysis}
\label{EDA}
To obtain data in visible spectrum we use a simple preprocessing procedure, collecting red, green and blue bands from Sentinel-2 Level-1C products and combining them into one GeoTIFF file. It is then handled by GDAL (\cite{gdal}), which creates three numeric matrices (one per each of the red, green and blue channels) with values between 0 and 1. The procedure is described in Appendix \ref{sec:Data_pre-processing} in more detail.
\begin{figure}[h!]
	\subfloat[Forest image analysed]{\includegraphics[width=0.42\textwidth]{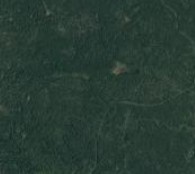}}
	\subfloat[Comparison of densities for red colour intensity]{\includegraphics[width=0.44\textwidth]{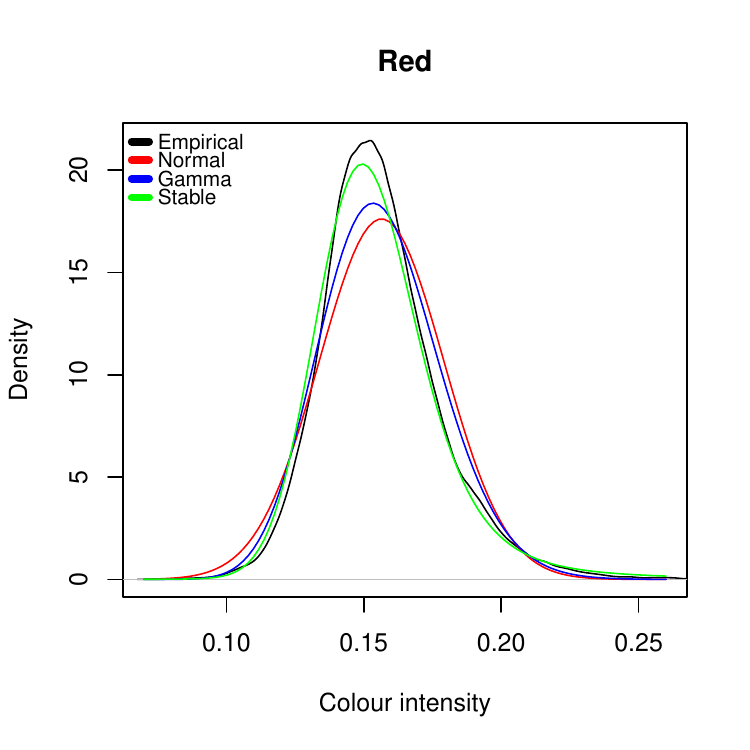}}\\
	\subfloat[Comparison of densities for green colour intensity ]{\includegraphics[width=0.44\textwidth]{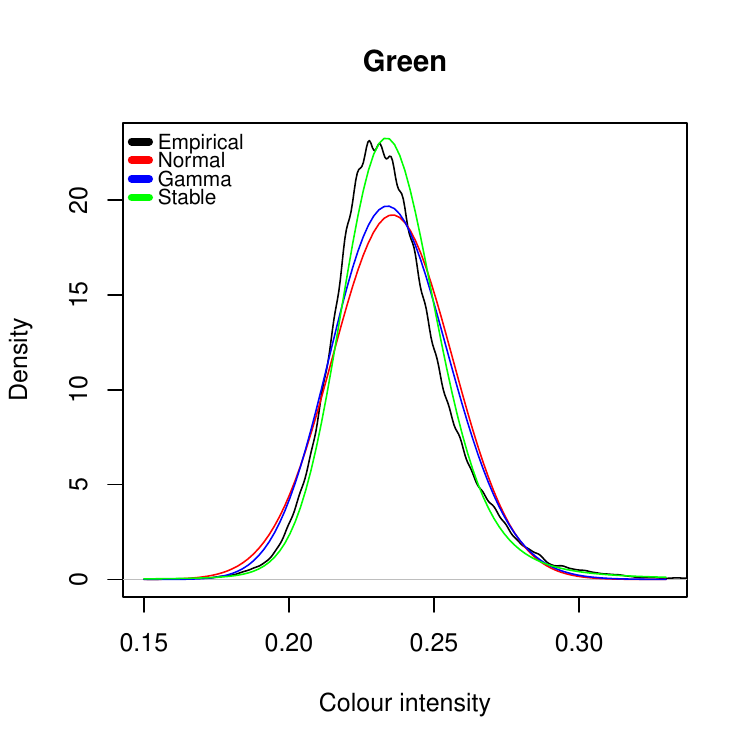}}
	\subfloat[Comparison of densities for blue colour intensity]{\includegraphics[width=0.44\textwidth]{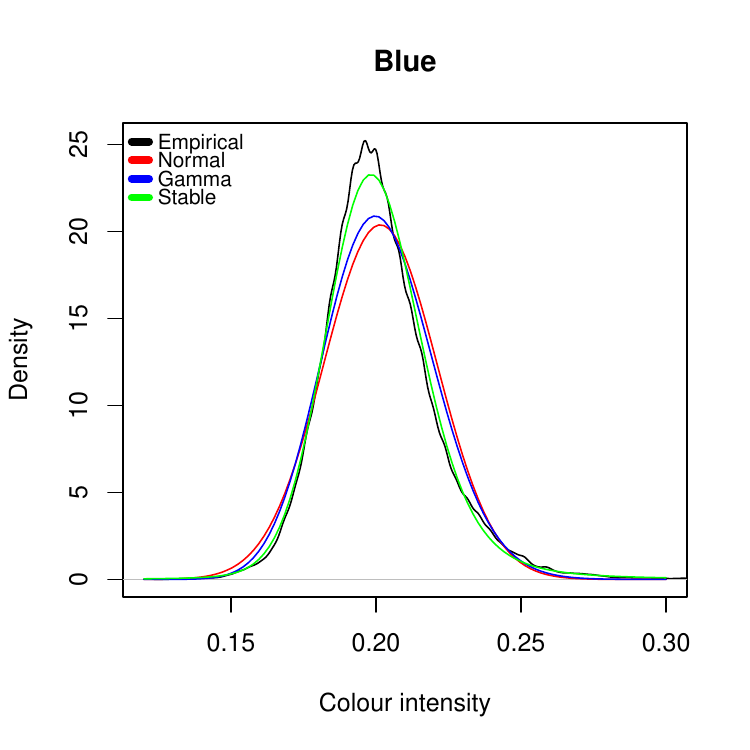}}
	
	\caption{Plots of estimated probability density functions for distribution of colour intensities in an image of a forest.  The means of the red, green and blue intensities are respectively 0.16, 0.24 and 0.20.}
	\label{fig:EDA}
\end{figure}

Initial inspection of the empirical density of the forest data gave us some additional ideas for potentially suitable distributions. We considered distributions including, but not limited to, stable, normal, log-normal, gamma, Cauchy, Weibull and Levy. We estimated parameters for each of the three colour intensity distributions separately using Koutrouvelis regression-type estimator for the stable distributions, discussed in Section \ref{param est}, and by maximum likelihood for the others. Then we calculated the root-mean-square-errors (RMSE) 
$$
\text{RMSE}=\sqrt{\frac{\sum_{i=1}^m(\hat{f}(x_i)-f^*(x_i))^2}{m}}
$$
where $\hat{f}(\cdot)$ is the probability density function using the estimated parameters, $f^*(\cdot)$ is the empirical density and $x_i$ is colour intensity of data point $i=1,\dots,m.$ The empirical density is estimated using kernel density smoothing. To choose the best fitting distributions we then plotted the densities of the three distributions with the lowest RMSE's together with the empirical densities and compared. These distributions appeared to be normal, gamma and stable consistently throughout the set of images. An example of a forest image with such comparison can be seen in Figure \ref{fig:EDA}, where we can observe the three best fitting distributions for a forest image with all parameters estimated. Further from the plots we see that the normal distributions do not seem to fit very well while the gamma and stable distributions are quite close, with stable edging out gamma especially for the intensity of the colour red.

Since the focus in this work is to be able to detect if an area has forest or not, we  describe detection of presence or lack of this type of natural objects. However, our exploratory data analysis shows us that the framework developed could be used to detect other natural objects from satellite images as well. In Figures \ref{fig:Mountain} and \ref{fig:Sea} in Appendix \ref{sec:EDA_adds}, we can see promising results of using the stable distribution to describe the colour intensities of images of mountains and sea. When it comes to man-made objects such as, for example a city, our approach does not work, and we have to at least assume a multimodal mixture of some kind to be appropriate since satellite images contain distinctly different parts, i.e., an orange roof, green grass and grey pavement.

\section{Detection with a non-parametric method}
\label{Sec:nonPar}
In this section, the non-parametric method for forest detection is developed. Later, in Section \ref{Sec:param est}, the ideas presented here are used to create the theory and implementation for the parametric one.
\subsection{Comparing images with the scaled, squared Mahalanobis distance}
\label{Hotelling}
The Mahalanobis distance $D$ was originally introduced as a distance between a given multivariate point $\boldsymbol{q}$ and a distribution. For a point $\boldsymbol{q}$ and a distribution with mean vector $\boldsymbol{\mu}$ and covariance matrix $\boldsymbol{\Sigma};$ the distance is given by 

$$
D=\sqrt{(\boldsymbol{q}-\boldsymbol{\mu})^T\boldsymbol{\Sigma}^{-1}(\boldsymbol{q}-\boldsymbol{\mu})}.
$$
Where, in practice, the mean and covariance matrix of the distribution would be estimated by the sample mean and covariance matrix. Further extensions can be made to calculate the distance between two multivariate populations with

\begin{equation}\label{mahala dist}
D=\sqrt{(\boldsymbol{\bar{X}}_1-\boldsymbol{\bar{X}}_2)^T\boldsymbol{\hat{\Sigma}}^{-1}(\boldsymbol{\bar{X}}_1-\boldsymbol{\bar{X}}_2}).
\end{equation}
Where $\boldsymbol{\bar{X}}_1$ and $\boldsymbol{\bar{X}}_2$ are the sample mean vectors of the two populations. The variable $\boldsymbol{\hat{\Sigma}}^{-1}$ is the inverse of the pooled sample covariance matrix, given by $\boldsymbol{\hat{\Sigma}}=(n_1\boldsymbol{\hat{\Sigma}_1}+n_2\boldsymbol{\hat{\Sigma}_2})/(n-2)$ for samples with respective sample sizes $n_1, n_2$, total sample size $n=n_1+n_2$ and sample covariance matrices $\boldsymbol{\hat{\Sigma}_1}$, $\boldsymbol{\hat{\Sigma}_2}$ respectively. An interested reader can find more on the topic in \cite{Mardia79}.

The Mahalanobis distance is often used in cases where there is correlation present in the considered multivariate distribution. As the reader might have noticed, the Mahalanobis distance expression is closely related to the statistical concept of standardization, i.e, subtracting the mean and dividing by the standard deviation. The idea is to decorrelate and scale each dimensions variance to be 1 and then calculate the Euclidean distance.

\begin{theorem} \label{thm:NonPar_test}
	Let $\boldsymbol{X_1}$ and $\boldsymbol{X_2}$ be independent samples of sizes $n_1 \times p$ and $n_2 \times p$ 
	composed from i.i.d. p-dimensional random vectors from distributions with finite mean $\boldsymbol{\mu}$ and finite covariance matrix $\boldsymbol{\Sigma}$. Then, for $D$ from (\ref{mahala dist}) we have that $\frac{n_1n_2}{n_1+n_2}D^2 \xrightarrow{d} \chi^2(p)$ as $n_1 \to \infty$ and $n_2 \to \infty$ at the same rate.
\end{theorem}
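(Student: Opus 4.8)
The plan is to reduce the statistic to a standardised Gaussian quadratic form and then identify its limiting law as a chi-square. First I would apply the multivariate central limit theorem to each sample mean: since $\boldsymbol{\bar X}_1$ averages $n_1$ i.i.d.\ vectors with mean $\boldsymbol\mu$ and covariance $\boldsymbol\Sigma$, one has $\sqrt{n_1}(\boldsymbol{\bar X}_1-\boldsymbol\mu)\xrightarrow{d}N(\boldsymbol 0,\boldsymbol\Sigma)$, and likewise for $\boldsymbol{\bar X}_2$. Because the two samples are independent these limits are independent, and the centred, scaled difference decomposes as
\begin{align*}
\sqrt{\tfrac{n_1 n_2}{n_1+n_2}}\,(\boldsymbol{\bar X}_1-\boldsymbol{\bar X}_2)
&=\sqrt{\tfrac{n_2}{n_1+n_2}}\,\sqrt{n_1}\,(\boldsymbol{\bar X}_1-\boldsymbol\mu)\\
&\quad-\sqrt{\tfrac{n_1}{n_1+n_2}}\,\sqrt{n_2}\,(\boldsymbol{\bar X}_2-\boldsymbol\mu),
\end{align*}
a combination whose squared coefficients sum to one. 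The key observation is that the difference of the two independent means has covariance $(1/n_1+1/n_2)\boldsymbol\Sigma$, which the prefactor $\sqrt{n_1n_2/(n_1+n_2)}$ exactly cancels; a short characteristic-function computation (or a subsequence argument on the weight $n_1/(n_1+n_2)$, which avoids having to assume a limiting sample-size ratio) then yields
$$
\boldsymbol Y_n:=\sqrt{\tfrac{n_1 n_2}{n_1+n_2}}\,(\boldsymbol{\bar X}_1-\boldsymbol{\bar X}_2)\xrightarrow{d}\boldsymbol Y\sim N(\boldsymbol 0,\boldsymbol\Sigma).
$$

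Next I would establish consistency of the pooled covariance estimator. The weak law of large numbers gives $\boldsymbol{\hat\Sigma}_1\xrightarrow{p}\boldsymbol\Sigma$ and $\boldsymbol{\hat\Sigma}_2\xrightarrow{p}\boldsymbol\Sigma$, and since the weights $n_1/(n-2)$ and $n_2/(n-2)$ sum to $n/(n-2)\to 1$, the pooled matrix $\boldsymbol{\hat\Sigma}=(n_1\boldsymbol{\hat\Sigma}_1+n_2\boldsymbol{\hat\Sigma}_2)/(n-2)$ satisfies $\boldsymbol{\hat\Sigma}\xrightarrow{p}\boldsymbol\Sigma$. Assuming $\boldsymbol\Sigma$ is positive definite, inversion is continuous in a neighbourhood of $\boldsymbol\Sigma$, so the continuous mapping theorem gives $\boldsymbol{\hat\Sigma}^{-1}\xrightarrow{p}\boldsymbol\Sigma^{-1}$.

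I would then assemble the pieces with Slutsky's theorem. Observing that $\frac{n_1 n_2}{n_1+n_2}D^2=\boldsymbol Y_n^{T}\boldsymbol{\hat\Sigma}^{-1}\boldsymbol Y_n$, the convergence $\boldsymbol Y_n\xrightarrow{d}\boldsymbol Y$ together with $\boldsymbol{\hat\Sigma}^{-1}\xrightarrow{p}\boldsymbol\Sigma^{-1}$ (a constant limit) and the continuous mapping theorem applied to the map $(\boldsymbol y,A)\mapsto\boldsymbol y^{T}A\boldsymbol y$ yields $\boldsymbol Y_n^{T}\boldsymbol{\hat\Sigma}^{-1}\boldsymbol Y_n\xrightarrow{d}\boldsymbol Y^{T}\boldsymbol\Sigma^{-1}\boldsymbol Y$. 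Finally $\boldsymbol\Sigma^{-1/2}\boldsymbol Y\sim N(\boldsymbol 0,\boldsymbol I_p)$, so $\boldsymbol Y^{T}\boldsymbol\Sigma^{-1}\boldsymbol Y=\lVert\boldsymbol\Sigma^{-1/2}\boldsymbol Y\rVert^{2}$ is a sum of $p$ independent squared standard normals, i.e.\ distributed as $\chi^2(p)$, which is the assertion.

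The step I expect to be the main obstacle is the rigorous treatment of the quadratic form carrying the random matrix $\boldsymbol{\hat\Sigma}^{-1}$: one cannot simply substitute $\boldsymbol\Sigma^{-1}$ inside a quadratic form of a sequence that only converges in distribution, so the argument must go through the joint-convergence form of Slutsky paired with the continuous mapping theorem, and invertibility throughout must be secured by the positive-definiteness of $\boldsymbol\Sigma$. A secondary subtlety is the asymptotic regime: both $n_1$ and $n_2$ must tend to infinity, and the cleanest route to the Gaussian limit for $\boldsymbol Y_n$ sidesteps any assumption on the limiting ratio $n_1/n_2$ by using that the squared coefficients in the decomposition always sum to one.
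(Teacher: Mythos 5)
Your proposal is correct and follows essentially the same route as the paper's own proof: a multivariate CLT for the difference of sample means, consistency of the pooled covariance estimator, and Slutsky-type joint convergence combined with the continuous mapping theorem applied to the quadratic form, identifying the limit as $\chi^2(p)$. If anything, your write-up is slightly more careful at the margins --- the paper states the CLT with an $n$-dependent limit $\mathcal{N}(\boldsymbol{0},\gamma\boldsymbol{\Sigma})$, $\gamma=n_1^{-1}+n_2^{-1}$, and factors through matrix square roots $\boldsymbol{\hat A}^{-1}=(\gamma\boldsymbol{\hat\Sigma})^{-1/2}$, whereas you normalise by $\sqrt{n_1n_2/(n_1+n_2)}$ to obtain a genuine fixed limit law, handle the possibly non-convergent sample-size weights via a characteristic-function or subsequence argument, and make the positive-definiteness of $\boldsymbol{\Sigma}$ explicit, all of which the paper leaves implicit.
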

For a proof, we refer to Appendix \ref{sec:Proofs}.
\begin{rem}
	Note, that $\boldsymbol{X_1}$ and $\boldsymbol{X_2}$ do not have to be drawn from the same distribution. If they are drawn from different ones, but possessing the same mean and covariance matrix, the result of the theorem still holds.
\end{rem}

\subsection{Hypothesis testing based on Mahalanobis distances between images}
\label{subsc:nonpar_hypoth_test}
Suppose we have an RGB image of forest (Figure \ref{fig:non-param_flowchart}) of the size $n \times p$ pixels. It consists of three matrices corresponding to each of the red, green and blue channels. The matrices contain values of colour intensities of pixels in the range $[0,1]$. Then, each matrix is split into column-vectors that are subsequently stack together to form one column-vector of the length $np$. The three obtained large vectors are bound to form a $np \times 3 $ matrix $\boldsymbol{X}$. Since the same operation is applied to every matrix, the triplets of pixel intensities are kept unchanged, so it is possible to analyse dependencies between colours. On the contrary, matrix $\boldsymbol{X}$ does not contain any information about spatial locations of the pixels. From $\boldsymbol{X}$, a three-dimensional mean vector and a $3 \times 3$ covariance matrix are calculated.
\begin{figure}[h!]
	\centering
	\includegraphics[width=0.7\textwidth]{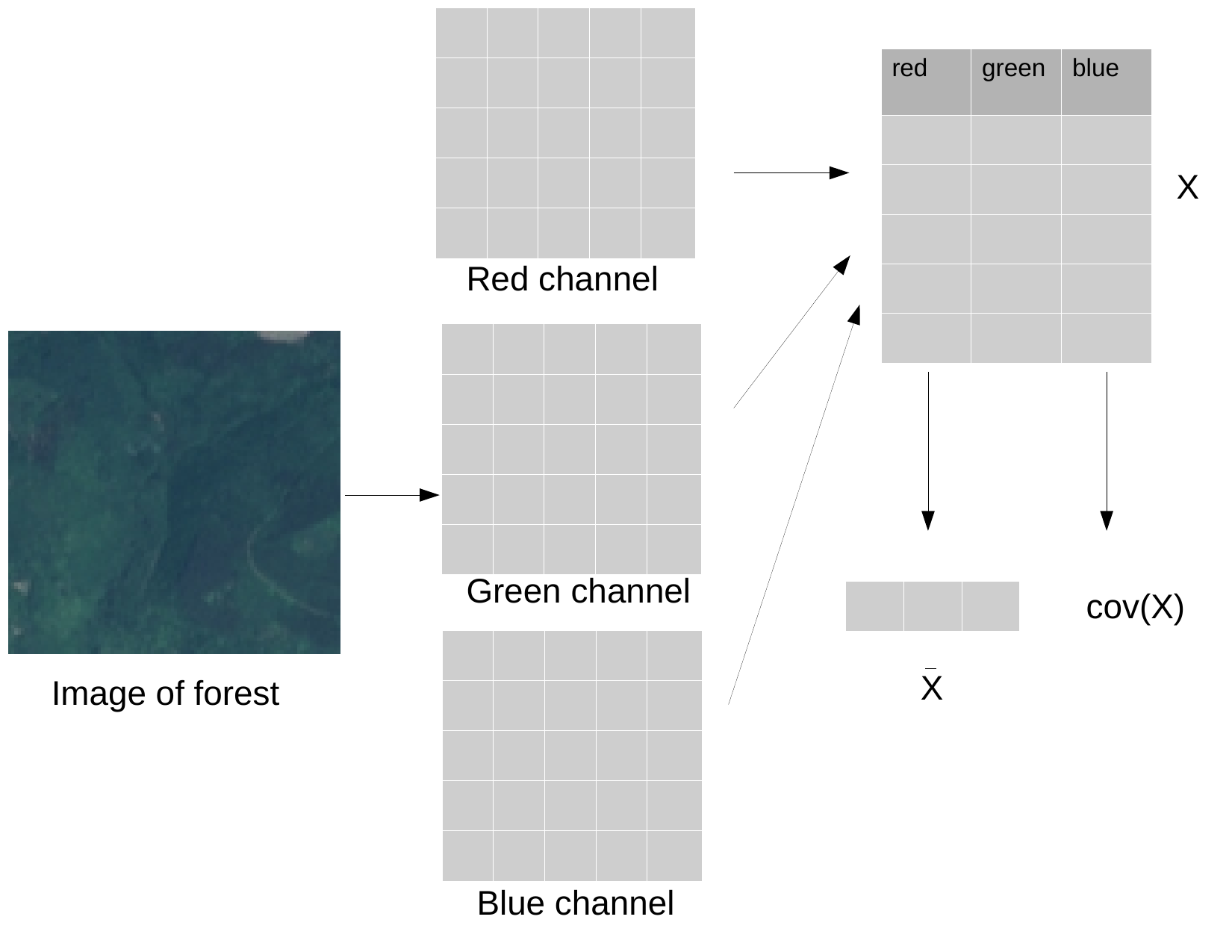}
	\caption{Data processing for the non-parametric testing}
	\label{fig:non-param_flowchart}
\end{figure}

\begin{figure}[ht!]
	\centering
	\includegraphics[width=.8\textwidth]{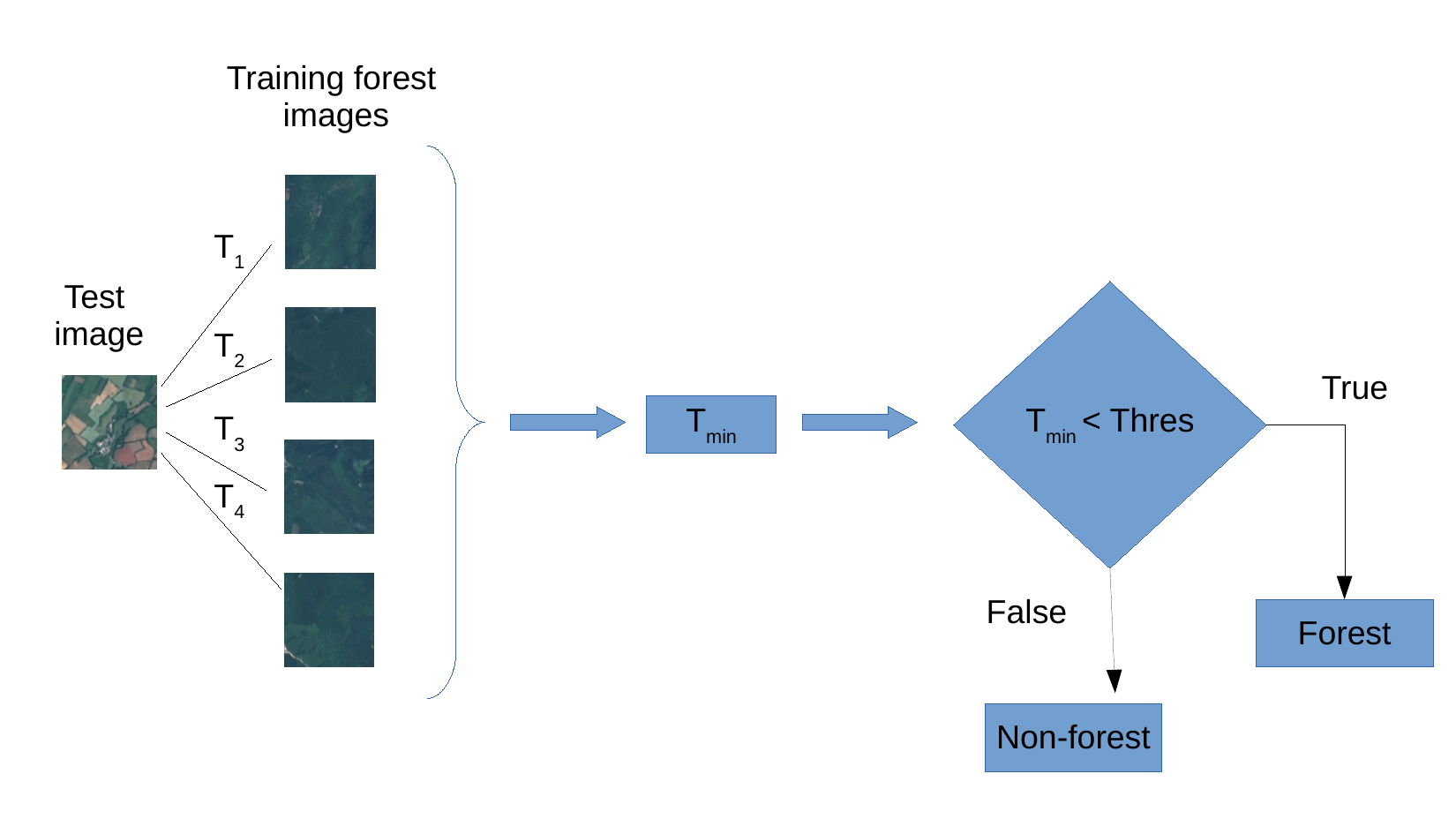}
	\caption{Classification in the non-parametric model}
	\label{fig:mult_hyp_test}
\end{figure}
The assumptions of Theorem \ref{thm:NonPar_test} hold for data of type of $\boldsymbol{X}$: since all entries are bounded between $0$ and $1$, the means and covariances will always be finite. This naturally leads us to a hypothesis testing framework. Having two images, one picture that is known to depict forest and one test image, we can use the theorem to accept or reject the null hypothesis that the test image is forest. This is done by first converting RGB composites into matrices $\boldsymbol{X_1}$ and $\boldsymbol{X_2}$, obtaining their sizes, means and covariance matrices, and second, computing the Mahalanobis distance between $\boldsymbol{X_1}$ and $\boldsymbol{X_2}$, and checking whether or not the rescaled Mahalanobis distance  $T:=\frac{n_1n_2}{n_1+n_2}D^2$ is lower than a certain threshold. 

A scheme for multiple hypothesis testing framework is depicted in Figure \ref{fig:mult_hyp_test}. When it comes to classification of a certain image, rescaled Mahalanobis distances $T_1, T_2, \dots$ between the image and every forest picture in the training set are computed. Out of all $T$'s, the minimal one is chosen, and is then compared to a threshold obtained during model training. If the minimal $T$ is smaller that the threshold, the test image is tagged forest, if greater, then---non-forest.
\begin{rem}
	This setting is equivalent to multiple hypothesis testing with the thresholds set to be equal. This restriction can be dropped, in which case the precision of predictions will generally increase. But the downside of such a generalization would be that the additional computational complexity during both training and classification would increase linearly with the number of forest images.
\end{rem}
\begin{rem}
	This algorithm describes dependences between colours by means of the pooled sample covariance matrix. On the contrary, all the information about spatial features are erased. Also, distributions of colour intensities in each channel are described only by their location parameters.
\end{rem}

\subsection{Training and cross-validation of the non-parametric model}
\label{subsc:nonpar_train_cv}
Training of the non-parametric model is schematically depicted in Figure \ref{fig:mult_hyp_train}. 
\begin{figure}[h!]
	\centering
	\includegraphics[width=0.8\textwidth]{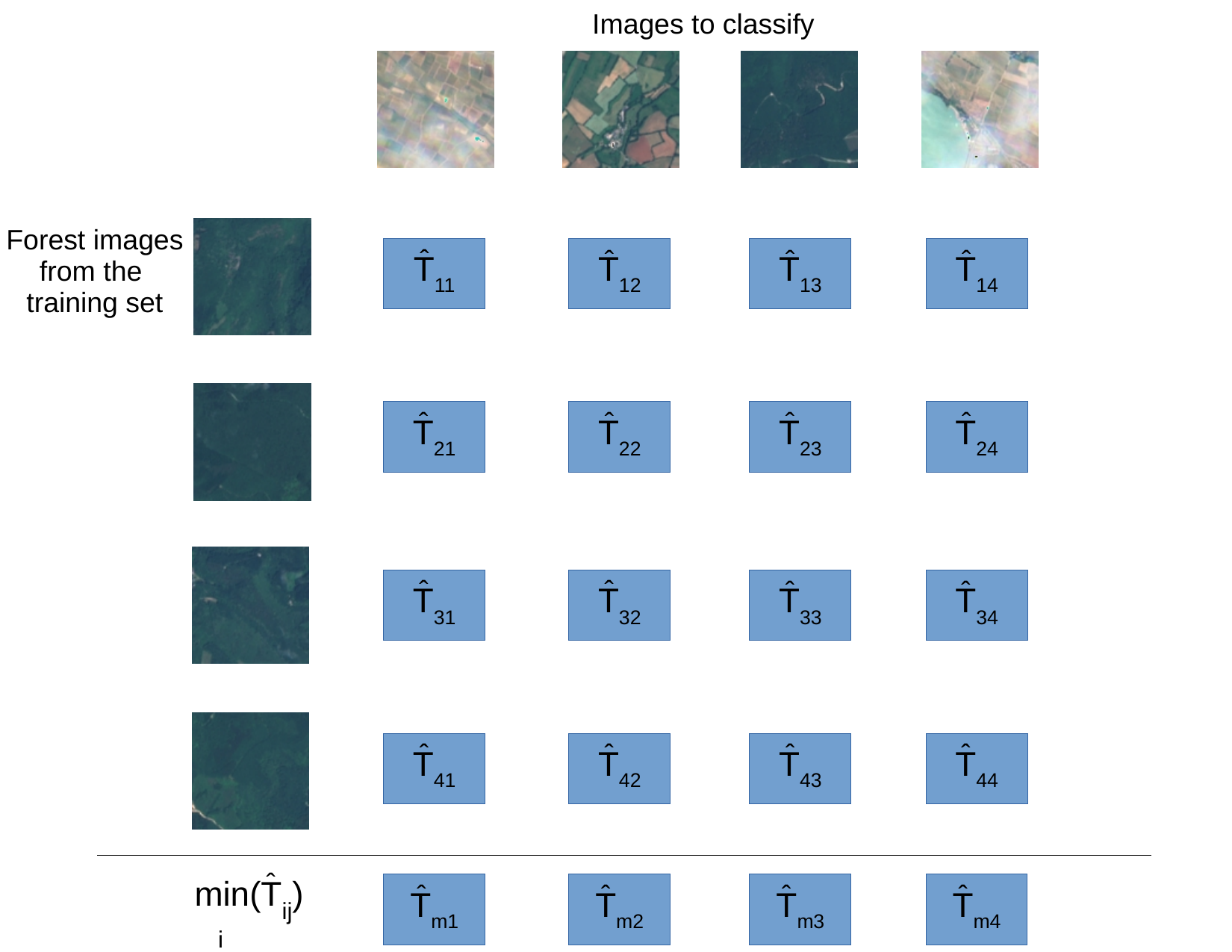}
	\caption{Training in the non-parametric model}
	\label{fig:mult_hyp_train}
\end{figure}
One can observe two sets of images there: reference forest images in the left column (set 1) and images of both types in the upper row (set 2). For every pair i-th image from set 1 and j-th image from set 2, statistic $\hat T_{ij} = \frac{n_1n_2}{n_1+n_2}D_{ij}^2$ is computed, where $D_{ij}$ is the Mahalanobis distance between the two pictures. Then, for every image from set 2, the minimal statistic $\min_i \hat T_{ij}$ is found. Lastly, the threshold $T$ is chosen to minimize misclassification error when classifying images from set 2 according to a result of evaluation of Boolean expression $\min_i \hat T_{ij}<T$. As previously, True corresponds to forest, while False---to non-forest.

Aside from simple training on a set of images, we adopt the setting of k-fold cross-validation for this model. Initially, we shuffle pictures and group them into $k$ folds. In this procedure, in order to preserve internal features, we don't split or merge pictures. We follow a similar procedure to what was described above. Once a fold is chosen, set 1 consists of all forest pictures in the other $k-1$ folds, while set 2 is the chosen holdout fold. Then statistics $\min_i \hat T_{ij}$ are computed for each of the $k$ holdout sets and $T$ is chosen to minimize the misclassification error on all the combinations. Cross-validation is summarized in Algorithm \ref{alg:np}.
\begin{algorithm}[h!]
	\caption{Non parametric cross validation procedure
		\label{alg:np}}
	\begin{algorithmic}[1]
		\State \textbf{Inputs:} Data set of $N$ images labelled forest or non-forest, number of folds $k$ \& max threshold range $T_{max}$.
		\State Compute means and covariances for every image.
		\State Randomly shuffle images and group them into $k$ folds.
		\For{$i =1$ to $k$}
		\For{each image in fold $i$}    
		\State \parbox[t]{313pt}{Compute $\hat T = \frac{n_1n_2}{n_1+n_2}D^2$ from Theorem \ref{thm:NonPar_test}, between the image and all images labelled forest in the remaining $k-1$ folds.\strut}
		\State Keep only the smallest $\hat T$ for the image.
		\EndFor
		\EndFor
		\For{$t$ in $0$ to $T_{max}$}
		\State \parbox[t]{313pt}{For all images and all $\hat T$'s, tag images with $\hat T<t$ as forest and $\hat T>t$ as non forest. \strut}
		\State Compute the accuracy for the given threshold $t,$ $\text{Acc}(t).$ 
		\EndFor
		\State \Return $\text{arg}\max_{t}\text{Acc}(t)$ and $n$'s, means and covariances of forest images.
	\end{algorithmic}
\end{algorithm}

\section{Detection with a parametric method}
\label{Sec:param est}
The non-parametric approach described in Section \ref{Sec:nonPar} is simple and universal in terms of underlying data, it doesn't make any assumptions on the data coming from RGB composites. On the other hand, it possesses a theoretical drawback: it doesn't take advantage of the distribution of the data, only its mean and covariance matrix. The parametric approach to the detection was designed basing on the opposite ideas: to capture the actual distributions of the data and parametrize them in order to make use of parametric theoretical machinery.

\subsection{Stable distributions for signal processing}
\label{subsc:StableReview}
The second classification method we develop in this study is based on univariate stable distributions as they proved to fit well to the empirical data (Section \ref{EDA}). In this short section we give some context for use of stable distributions in the field of signal processing, describe how they are fit to satellite data in the present paper and how we estimate sampled data.

The stable distribution family is a class generalising the Gaussian one. In signal processing it appears due to two factors. First, stable distribution is a result of the Generalised Central Limit Theorem which says that if there exists any limit of a mean of i.i.d. variables with possibly unbounded variations, then this limit must have a stable distribution. Thus, this class is a natural choice for modelling signals with potentially large deviations. Second, stable distributions might be asymmetric, and it is controlled by one of the parameters. This family of random variables has already seen use in remote sensing areas such as ship detection (\cite{Wang08}) and video background subtraction (\cite{bhaskar2010}). While it is  unclear to us what role the Generalised Central Limit Theorem plays in shaping the distributions of forest colour intensities, it is certain that parametric control of their shape provides accurate fitting (Figure \ref{fig:EDA}) when the general stable distribution is assumed.

\begin{definition}
	\label{def:stab_rv}
	A random variable $X$ is said to have a stable distribution if there are parameters $0 < \alpha \leq 2,$ $\sigma \geq 0$, $-1\leq \beta \leq 1$ and $\delta \in \mathds{R}$ such that its characteristic function has the following form: 
	\begin{equation} 
	\label{stable def2}
	\normalfont
	\phi(t)= E[e^{itX}]=
	\begin{cases}
	\text{exp}\bigg\{-\sigma ^\alpha|t|^\alpha(1-i\beta\text{sgn} (t) \text{tan}(\frac{\pi\alpha}{2}))+i\delta t\bigg\} & \text{if $\alpha \neq 1$}\\
	\text{exp}\bigg\{-\sigma|t|(1+i\beta \frac{2}{\pi}\text{sgn} (t)\text{log}(|t|))+i\delta t\bigg\} & \text{if $\alpha = 1$.}
	\end{cases}
	\end{equation}
\end{definition}
Where sgn$(t) = 0$ if $t=0$, $-1$ if $t<0$ and $1$ if $t>0.$ The parameter $\alpha$ is known as the index of stability and controls the decay of the tails. The parameters $\beta$ is not the classical statistical skewness but it indicates how the distribution is skewed, indicating left skewness if $\beta < 0$, right skewness if $\beta > 0$ and symmetry if $\beta = 0.$ The parameter $\sigma$ is a scale parameter controlling variability and $\delta$ a location parameter which shifts the distribution. The location parameter only corresponds to the mean when $\alpha > 1$, otherwise the mean is undefined. In fact, for $\alpha < 2$ the $p$-th absolute moment $E[|X|^p]<\infty$  iff $p<\alpha.$

Apart from a few cases, the closed form probability function is unknown for stable laws. There exists a method for computing stable densities numerically (\cite{Nolan}) but it is computationally demanding. That is why we developed a mathematical foundation of our parametric method based on CF's and ECF's.
The most famous special cases where the probability density function can be expressed explicitly is when the stable distribution follows a Gaussian, Cauchy or Lévy distribution. A stable distribution is Gaussian when $\alpha = 2$.

\subsubsection{Parameter estimation}
\label{param est}
This section will cover estimation of the four parameters of the univariate stable distribution used in this article. There is a number of techniques available for estimation of these parameters, including quantile-based estimators, maximum-likelihood-based estimators and those based on the method of moments. For this work, the Koutrouvelis regression-type estimator (\cite{Kout}) was chosen, as it shows a great balance between precision and computational performance. Estimators which performed well but were very slow were also studied, including ones based on the generalized method of moments. Similarly, estimators which were faster than the Koutrouvelis one were assessed, but these did not perform well enough for our purposes: the McCulloch estimator (\cite{mcculloch}) based on quantiles and the one developed by \citeauthor{S_J_Press_72}. A more in-depth study on the performance of different parameter estimators may be of interest to individuals interested in improving the algorithm proposed in this article.

The Koutrouvelis regression-type estimator is quite fast, but one big downside is that the estimator of the location parameter $\delta$ is often unreliable. For our purposes this turns out not to be a big issue though because experimentally we have seen that the parameter $\alpha > 1$ in all our forest data and when this is the case the location parameter $\delta$ equals the mean of the distribution \cite[Property 1.2.16]{Taqqu}. Which implies that the sample mean is a consistent estimator for the location parameter $\delta$ for our data.

\subsection{A test for empirical characteristic functions of stable distributions}
\label{subsc:Param_MahalaDist}
In this part, we develop a statistical test that rejects a null hypothesis that a sample comes from a stable distribution defined by (\ref{stable def2}) with certain parameters.

First, we provide a limit theorem for vectors composed of parts of ECFs of stable random variables. 
For a sequence $\{X_k\}_{k=1, \dots}$ of i.i.d. stable random variables  the ECF is given by 
\begin{equation}
\phi_n(t) := \frac{1}{n}\sum_{k=1}^{n}[\cos(tX_k) + i\sin(tX_k)].
\end{equation}
We define new random vectors
\begin{equation}
\boldsymbol{Z_n}(t) := \frac{1}{n}\sum_{k=1}^{n}
\begin{bmatrix}
\cos(tX_k) \\
\sin(tX_k)
\end{bmatrix}
\quad
\boldsymbol{Z_0}(t) :=
\begin{bmatrix}
\mathrm{Re}[\phi(t)] \\
\mathrm{Im}[\phi(t)]
\end{bmatrix}
\end{equation}
\begin{prop}
	\label{prop:z_n}
	Fix $t \in R$, then
	$\sqrt{n}\left[\boldsymbol{Z_n}(t)-\boldsymbol{Z_0}(t)\right]\xrightarrow{d} \mathcal{N}(0,\boldsymbol{\Sigma_z})$, where
	\begin{equation}
	\label{mtrx:Sigma_z}
	\boldsymbol{\Sigma_z}=
	\begin{bmatrix}
	\sigma_{11}  & \sigma_{12} \\
	\sigma_{21}  & \sigma_{22}
	\end{bmatrix}
	\end{equation}
	and the elements of the matrix are given by
	\begin{equation}
	\begin{split}
	\sigma_{11} =& \frac{1}{4} \left[ \phi(2t) + 2 + \phi(-2t) - (\phi(t))^2 -2\phi(t)\phi(-t) - (\phi(-t))^2 \right] \\
	\sigma_{22} =& \frac{1}{4} \left[ [\phi(t)]^2 - 2\phi(t)\phi(-t) +[\phi(-t)]^2 \right] - \frac{1}{4}[\phi(2t) + \phi(-2t) - 2] \\
	\sigma_{12} = \sigma_{21} =& \frac{1}{4i} \left[ \phi(2t) - [\phi(t)]^2 - \phi(-2t) + [\phi(-t)]^2 \right].
	\end{split}
	\end{equation}
\end{prop}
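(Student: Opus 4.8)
The plan is to recognise $\boldsymbol{Z_n}(t)$ as a sample average of i.i.d.\ bounded random vectors and then invoke the multivariate central limit theorem. Concretely, for fixed $t$ I would set $\boldsymbol{Y_k} := [\cos(tX_k), \sin(tX_k)]^T$, so that $\boldsymbol{Z_n}(t) = \frac{1}{n}\sum_{k=1}^n \boldsymbol{Y_k}$ is the sample mean of the i.i.d.\ vectors $\boldsymbol{Y_1}, \boldsymbol{Y_2}, \dots$. The crucial observation, and the reason the argument works at all for stable laws (whose moments may be infinite, as recalled in Section~\ref{subsc:StableReview}), is that the two entries of $\boldsymbol{Y_k}$ are bounded by $1$ in absolute value. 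Hence each component has a finite mean and the covariance matrix of $\boldsymbol{Y_k}$ exists and is finite, so the hypotheses of the multivariate CLT are satisfied with no restriction on $\alpha$.

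First I would identify the centring vector. Since $\phi(t) = E[\cos(tX)] + iE[\sin(tX)]$, we have $E[\boldsymbol{Y_k}] = [\mathrm{Re}\,\phi(t), \mathrm{Im}\,\phi(t)]^T = \boldsymbol{Z_0}(t)$. The multivariate CLT then yields $\sqrt{n}[\boldsymbol{Z_n}(t) - \boldsymbol{Z_0}(t)] \xrightarrow{d} \mathcal{N}(0, \boldsymbol{\Sigma_z})$, where $\boldsymbol{\Sigma_z}$ is precisely the covariance matrix of the single vector $\boldsymbol{Y_k}$; it then remains only to compute its three distinct entries and cast them into the stated form.

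The remaining work is the explicit evaluation of $\sigma_{11} = \mathrm{Var}[\cos(tX)]$, $\sigma_{22} = \mathrm{Var}[\sin(tX)]$ and $\sigma_{12} = \sigma_{21} = \mathrm{Cov}[\cos(tX), \sin(tX)]$. I would linearise the squares and products using $\cos^2\theta = \tfrac12(1 + \cos 2\theta)$, $\sin^2\theta = \tfrac12(1 - \cos 2\theta)$ and $\cos\theta\sin\theta = \tfrac12 \sin 2\theta$, turning every second moment into an expectation of $\cos(2tX)$ or $\sin(2tX)$. These are read off the characteristic function via $E[\cos(sX)] = \mathrm{Re}\,\phi(s) = \tfrac12[\phi(s) + \phi(-s)]$ and $E[\sin(sX)] = \mathrm{Im}\,\phi(s) = \tfrac{1}{2i}[\phi(s) - \phi(-s)]$, where I use that $\phi(-s) = \overline{\phi(s)}$ for real $X$. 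Substituting $s = t$ and $s = 2t$, and subtracting the products of the first moments $\mathrm{Re}\,\phi(t)$ and $\mathrm{Im}\,\phi(t)$, reproduces the displayed formulas for $\sigma_{11}, \sigma_{22}$ and $\sigma_{12}$ once the terms are collected over the common denominator $4$ (respectively $4i$).

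There is no genuine analytic obstacle here: the statement is a direct CLT application and the only real effort is the trigonometric bookkeeping in the final step. The one point that deserves care is that it is the boundedness of sine and cosine, not any integrability of $X$, that licenses the CLT; conflating finiteness of the moments of $\boldsymbol{Y_k}$ with finiteness of the moments of $X$ itself would be the tempting error to avoid.
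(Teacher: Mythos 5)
Your proposal is correct and follows essentially the same route as the paper: the paper likewise invokes the multivariate CLT on the bounded vectors $[\cos(tX_k),\sin(tX_k)]^T$ (writing $u_j=\tfrac12(e^{itX_j}+e^{-itX_j})$ and $v_j=\tfrac{1}{2i}(e^{itX_j}-e^{-itX_j})$, which is just the complex-exponential form of your trigonometric bookkeeping) and then reads the variances and covariance off $\phi$ at $\pm t$ and $\pm 2t$. Your closing remark correctly identifies the key point the paper also relies on --- it is the boundedness of sine and cosine, not any moment condition on the stable variable $X$, that licenses the CLT.
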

A proof can be found in Appendix \ref{sec:Proofs}.
\begin{rem}
	Note that when  $X_j, j=1, \dots, n$ are also symmetric and centred around the origin $\boldsymbol{\Sigma_z}$ reduces to the form 
	\begin{equation}
	\boldsymbol{\Sigma_z}=
	\begin{bmatrix}
	\frac{1}{2} \left[ 1 + \phi(2t) - 2(\phi(t))^2 \right]  & 0 \\
	0  & \frac{1}{2}[1- \phi(2t)]
	\end{bmatrix}
	\end{equation}
	This case was described in Section 3 of \cite{S_J_Press_72}.
\end{rem}

Proposition \ref{prop:z_n} provides a limiting distribution for a two-dimensional vector $\boldsymbol{Z_n} - \boldsymbol{Z_0}$, making the result inconvenient for statistical inference. To address that, we define an analogy of Mahalanobis distance, a statistic characterising how far $\boldsymbol{Z_n}$ is from $\boldsymbol{Z_0}$. The following theorem provides a limiting distribution for this statistic and plays the key role in the parametric inference.
\begin{theorem}
	\label{thm:param_test}
	Fix $t$. Assume that $\boldsymbol{\hat \Sigma_{z}} \xrightarrow{d} \boldsymbol{\Sigma_{z}}$ and that $\boldsymbol{Z^{(1)}_n}(t)$, $\boldsymbol{Z^{(2)}_n}(t)$ come from i.i.d. $\boldsymbol{X_1}$ and $\boldsymbol{X_2}$ with distribution determined by $\boldsymbol{Z_0}(t)$. Then
	\begin{align}
	\begin{split}
	\text{a) } & \hat T_{i,0} := n\left[\boldsymbol{Z^{(i)}_n}(t) - \boldsymbol{Z_0}(t)\right]^{T}\boldsymbol{\hat \Sigma_{z}}^{-1}\left[\boldsymbol{Z^{(i)}_n}(t) - \boldsymbol{Z_0}(t)\right]
	\xrightarrow{d} \mathcal{\chi}^2(2) \quad \text{as } n\to \infty, \quad i=1,2 \\
	\text{b) } & \hat T_{1,2} := n\left[\boldsymbol{Z^{(1)}_n}(t) - \boldsymbol{Z^{(2)}_n}(t)\right]^{T}\boldsymbol{\hat \Sigma_{z}}^{-1}\left[\boldsymbol{Z^{(1)}_n}(t) - \boldsymbol{Z^{(2)}_n}(t)\right]
	\xrightarrow{d} \mathcal{\chi}^2(2) \quad \text{as } n\to \infty.
	\end{split}
	\end{align}
\end{theorem}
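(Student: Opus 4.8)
The plan is to reuse, almost verbatim, the argument of Theorem \ref{thm:NonPar_test}: the two-dimensional Gaussian limit of Proposition \ref{prop:z_n} plays the role the multivariate CLT played there, and parts a) and b) are simply the one-sample and two-sample instances of the same quadratic-form computation. Throughout I fix $t$ and assume $\boldsymbol{\Sigma_z}$ is positive definite, so that $\boldsymbol{A} := \boldsymbol{\Sigma_z}^{1/2}$ exists and is invertible.

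For part a), Proposition \ref{prop:z_n} gives
$$\sqrt{n}\left[\boldsymbol{Z^{(i)}_n}(t) - \boldsymbol{Z_0}(t)\right] \xrightarrow{d} \mathcal{N}(\boldsymbol{0}, \boldsymbol{\Sigma_z}) \stackrel{d}{=} \boldsymbol{A}\boldsymbol{Z}, \quad \boldsymbol{Z}\sim\mathcal{N}(\boldsymbol{0}, \boldsymbol{I}_2).$$
Since $\boldsymbol{\hat{\Sigma}_{z}}$ converges to the constant matrix $\boldsymbol{\Sigma_z}$, applying the continuous map $M \mapsto M^{-1/2}$ gives $\boldsymbol{\hat A}^{-1} := (\boldsymbol{\hat{\Sigma}_{z}})^{-1/2} \xrightarrow{P} \boldsymbol{A}^{-1}$. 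Exactly as in (\ref{eq:joint_converg}), pairing a sequence converging in probability to a constant with a weakly convergent one yields
$$\left[\boldsymbol{\hat A}^{-1}, \sqrt{n}(\boldsymbol{Z^{(i)}_n}(t) - \boldsymbol{Z_0}(t))\right] \xrightarrow{d} \left[\boldsymbol{A}^{-1}, \boldsymbol{A}\boldsymbol{Z}\right].$$
Defining the quadratic form $\boldsymbol{Q_i} := \left[\boldsymbol{\hat A}^{-1}\sqrt{n}(\boldsymbol{Z^{(i)}_n} - \boldsymbol{Z_0})\right]^{'}\left[\boldsymbol{\hat A}^{-1}\sqrt{n}(\boldsymbol{Z^{(i)}_n} - \boldsymbol{Z_0})\right]$ and invoking the continuous mapping theorem gives $\boldsymbol{Q_i} \xrightarrow{d} \boldsymbol{Z}^{'}\boldsymbol{Z} \stackrel{d}{=} \chi^2(2)$. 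The chain of identities from (\ref{eq:D_sq_fin_form}), in particular $(\boldsymbol{\hat A}^{-1})^{'}\boldsymbol{\hat A}^{-1} = \boldsymbol{\hat{\Sigma}_{z}}^{-1}$, shows $\boldsymbol{Q_i} = \hat T_{i,0}$, which settles a).

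For part b) I would decompose $\boldsymbol{Z^{(1)}_n} - \boldsymbol{Z^{(2)}_n} = (\boldsymbol{Z^{(1)}_n} - \boldsymbol{Z_0}) - (\boldsymbol{Z^{(2)}_n} - \boldsymbol{Z_0})$. The two centred pieces are independent, being built from the independent samples $\boldsymbol{X_1}$ and $\boldsymbol{X_2}$, and each obeys Proposition \ref{prop:z_n}, so that
$$\sqrt{n}\left[\boldsymbol{Z^{(1)}_n}(t) - \boldsymbol{Z^{(2)}_n}(t)\right] \xrightarrow{d} \mathcal{N}(\boldsymbol{0}, 2\boldsymbol{\Sigma_z}).$$
The one-sample machinery then transfers directly, and the continuous-mapping step produces the quadratic form with limit $\boldsymbol{W}^{'}\boldsymbol{\Sigma_z}^{-1}\boldsymbol{W}$, where $\boldsymbol{W}\sim\mathcal{N}(\boldsymbol{0}, 2\boldsymbol{\Sigma_z})$.

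I expect this last normalisation to be the main obstacle. Writing $\boldsymbol{W} = \sqrt{2}\,\boldsymbol{A}\boldsymbol{Z}$ gives $\boldsymbol{W}^{'}\boldsymbol{\Sigma_z}^{-1}\boldsymbol{W} = 2\,\boldsymbol{Z}^{'}\boldsymbol{Z} \stackrel{d}{=} 2\chi^2(2)$, so the factor of two carried by the covariance of a difference of two independent samples has to be accounted for. Two equivalent remedies land exactly on $\chi^2(2)$: rescale the statistic by $\tfrac{n}{2}$ for equal sample sizes (equivalently by the factor $\tfrac{n_1 n_2}{n_1+n_2}$ of Theorem \ref{thm:NonPar_test} when the two ECFs come from samples of sizes $n_1$ and $n_2$), or plug in a matrix that estimates the covariance $2\boldsymbol{\Sigma_z}$ of the difference rather than $\boldsymbol{\Sigma_z}$ itself; in either case $\boldsymbol{W}$ takes the role of $\boldsymbol{A}\boldsymbol{Z}$ and the limit becomes $\boldsymbol{Z}^{'}\boldsymbol{Z} \stackrel{d}{=} \chi^2(2)$. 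The remaining ingredients, namely the Slutsky joint-convergence step and the positive-definiteness of $\boldsymbol{\Sigma_z}$ (which fails only at degenerate values of $t$), are routine.
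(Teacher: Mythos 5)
Your part a) is essentially the paper's own argument: the paper likewise combines Proposition \ref{prop:z_n} with a Slutsky-type step, writing $\left(\frac{1}{n}\boldsymbol{\hat \Sigma_{z}}\right)^{-1/2}(\boldsymbol{Z_n}-\boldsymbol{Z_0})\xrightarrow{d}\mathcal{N}(0,I)$ and then applying the continuous mapping theorem to the quadratic form, i.e.\ a compressed rerun of Theorem \ref{thm:NonPar_test}, so there is nothing to add there. For part b), however, you have gone beyond the paper and caught a genuine issue. The paper's entire proof of b) is the sentence ``b) follows from independence of $\boldsymbol{Z^{(1)}_n}$ and $\boldsymbol{Z^{(2)}_n}$,'' which silently skips exactly the normalisation you worked out: with both ECFs built from samples of size $n$, independence gives $\sqrt{n}\left[\boldsymbol{Z^{(1)}_n}(t)-\boldsymbol{Z^{(2)}_n}(t)\right]\xrightarrow{d}\mathcal{N}(\boldsymbol{0},2\boldsymbol{\Sigma_z})$, so the statistic $\hat T_{1,2}$ exactly as displayed converges to $2\chi^2(2)$, not $\chi^2(2)$. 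Your two remedies --- rescaling by $n/2$ (in general by $\frac{n_1 n_2}{n_1+n_2}$, precisely the factor appearing in Theorem \ref{thm:NonPar_test}), or plugging in an estimator of the covariance $2\boldsymbol{\Sigma_z}$ of the difference --- are both correct and recover the stated limit; one of them is evidently what the theorem intends. The slip is inconsequential for the algorithms, since the paper explicitly declines to build the parametric framework on part b) (see Section \ref{subsc:par_train_cv}), but as literally stated part b) requires your correction. Two minor points you handle correctly and could state explicitly: positive definiteness of $\boldsymbol{\Sigma_z}$ (hence invertibility of $\boldsymbol{\hat \Sigma_{z}}$ for large $n$) is needed and is only implicit in the paper, and convergence in distribution to the constant matrix $\boldsymbol{\Sigma_z}$ upgrades to convergence in probability, which is what licenses the joint-convergence step.
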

Appendix \ref{sec:Proofs} includes a proof of this result.

\subsection{Hypothesis testing based on ECFs of colour intensities}
\label{subsc:ParamHypTest}
Now, let us return to the data extraction (Figure \ref{fig:param_flowchart}). Assume we have an image of forest and a test image. Having extracted matrix $\boldsymbol{X}$, one can estimate parameters of the stable distributions of the forest picture channels and compute vectors $\boldsymbol{Z_0}^r(t), \boldsymbol{Z_0}^g(t)$ and $\boldsymbol{Z_0}^b(t)$, where the superscripts $r,g$ and $b$ refer to red, green and blue channels. Similarly, after obtaining $\boldsymbol{X}$ from the test image, empirical characteristic functions are computed and subsequently $\boldsymbol{Z_n}^r(t), \boldsymbol{Z_n}^g(t)$ and $\boldsymbol{Z_n}^b(t)$ are obtained. Note that we use CFs and ECFs of each channel separately, so dependencies between colours are ignored so are spatial dependencies.
\begin{figure}[b!]
	\centering
	\includegraphics[width=0.9\textwidth]{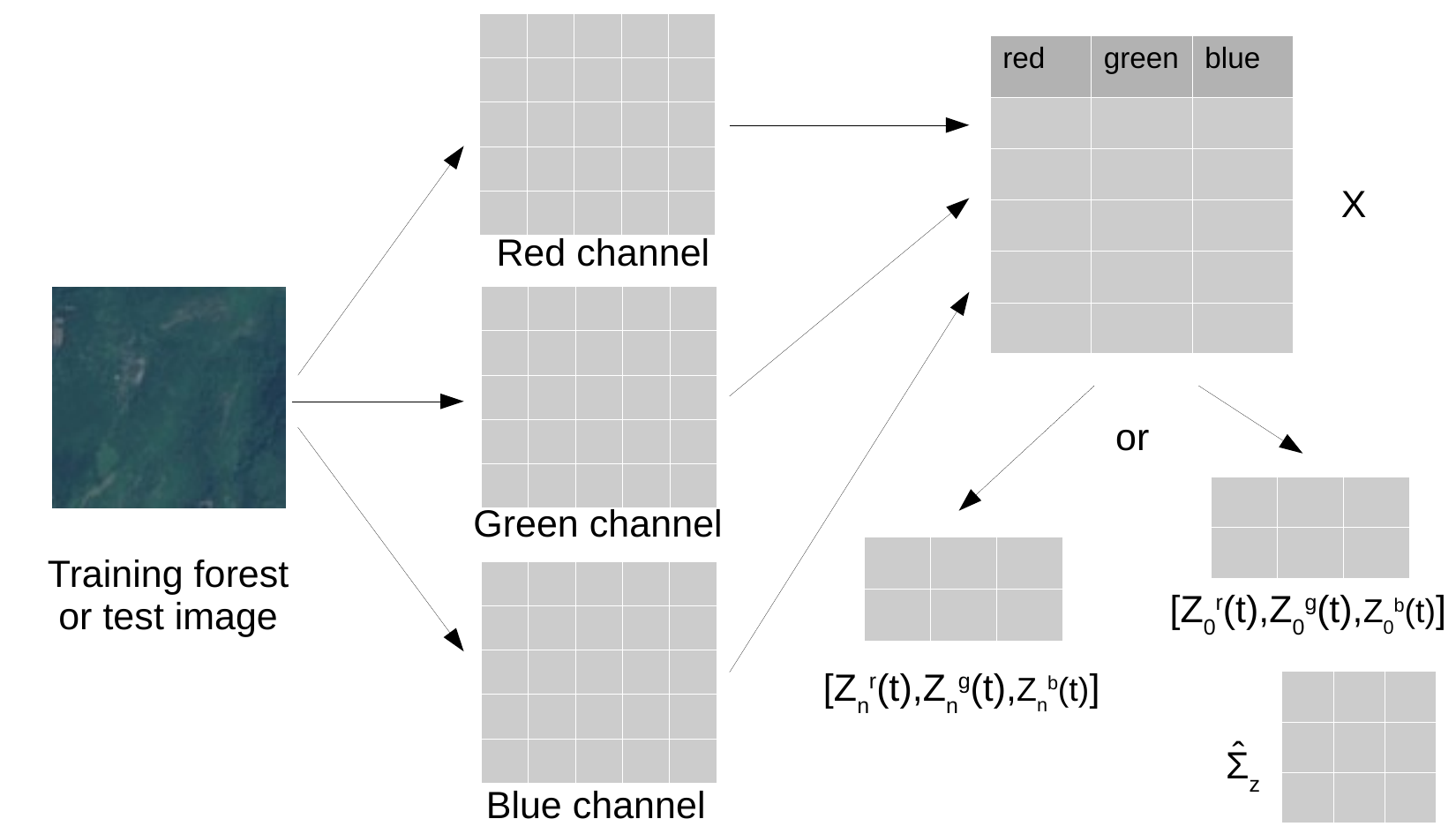}
	\caption{Data processing for the parametric testing}
	\label{fig:param_flowchart}
\end{figure}

Hypothesis testing in the parametric setting is illustrated in Figure \ref{fig:param_mult_hyp_test}. A test image is compared with each of the forest images in the training set. When comparing a test image and a forest one, $[\boldsymbol{Z_0}^r(t), \boldsymbol{Z_0}^g(t), \boldsymbol{Z_0}^b(t)]$ and $[\boldsymbol{Z_n}^r(t), \boldsymbol{Z_n}^g(t), \boldsymbol{Z_n}^b(t)]$ are used to test separately that the colour intensities of the test image are those of forest:
\begin{equation}
\label{eq:par_mult_hyp_test}
\begin{split}
T^r:=n(\boldsymbol{Z_n}^r - \boldsymbol{Z_0}^r)^{T}\boldsymbol{\hat \Sigma_{z;r}}^{-1}(\boldsymbol{Z_n}^r - \boldsymbol{Z_0}^r)
\xrightarrow{d} \mathcal{\chi}^2(2) \\
T^g:=n(\boldsymbol{Z_n}^g - \boldsymbol{Z_0}^g)^{T}\boldsymbol{\hat \Sigma_{z;g}}^{-1}(\boldsymbol{Z_n}^g - \boldsymbol{Z_0}^g)
\xrightarrow{d} \mathcal{\chi}^2(2) \\
T^b:=n(\boldsymbol{Z_n}^b - \boldsymbol{Z_0}^b)^{T}\boldsymbol{\hat \Sigma_{z;b}}^{-1}(\boldsymbol{Z_n}^b - \boldsymbol{Z_0}^b)
\xrightarrow{d} \mathcal{\chi}^2(2) 
\end{split}
\end{equation}
It implies that in order to be tagged as forest, a test image must have all three colour samples passing their corresponding test. Since the limiting distributions in (\ref{eq:par_mult_hyp_test}) are the same, thresholds for each channel are set to be equal.

For each pair of a test image and a forest one, $T_{i}:=\min(T^r_i, T^g_i, T^b_i)$ is computed and then $T_{min}:=\min_i(T_{i})$ is obtained and compared with the threshold.
\begin{figure}[H]
	\centering
	\includegraphics[width=.8\textwidth]{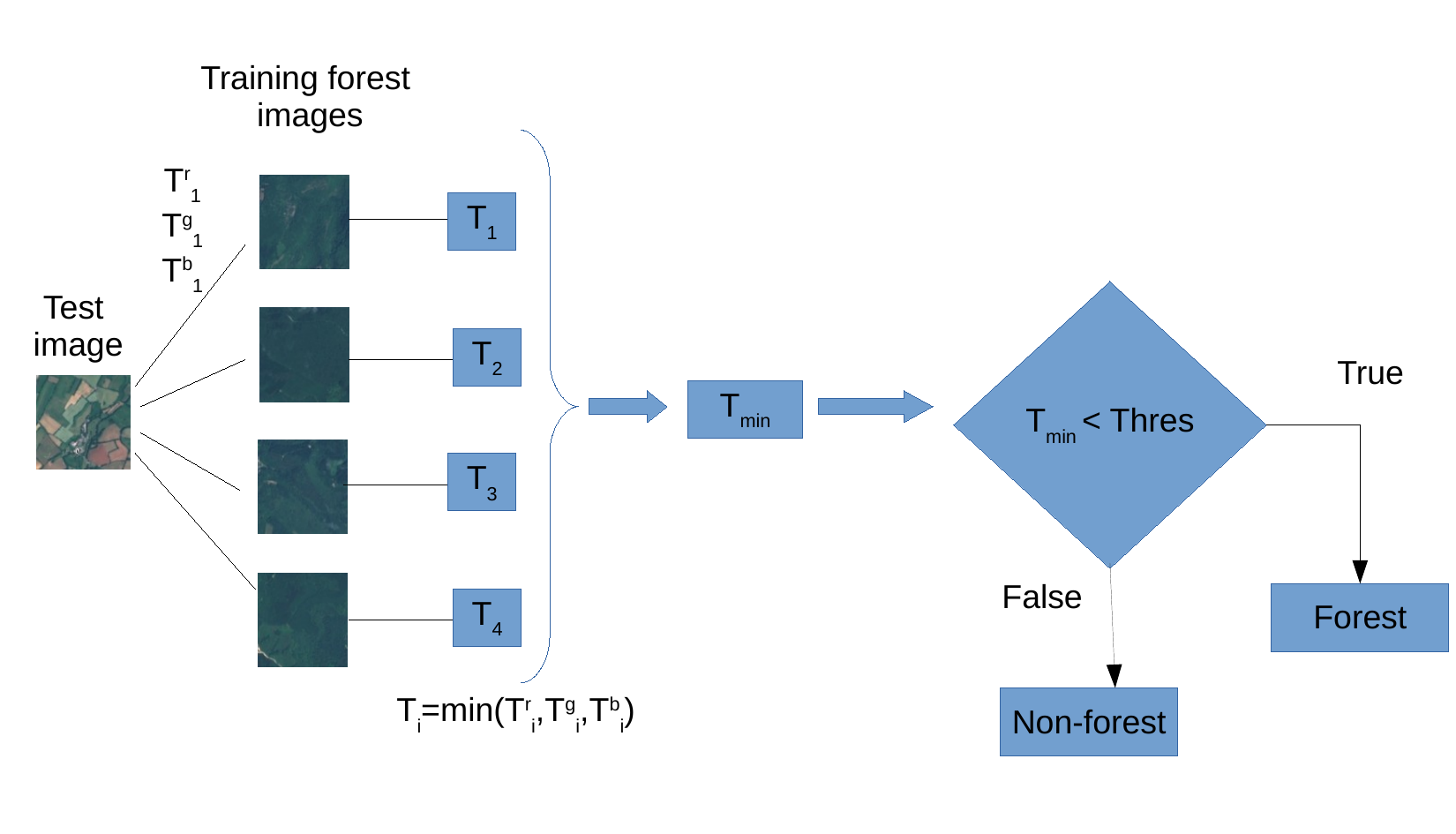}
	\caption{Classification in the parametric model}
	\label{fig:param_mult_hyp_test}
\end{figure}

\subsection{Training and cross-validation of the parametric model}
\label{subsc:par_train_cv}
The training procedure starts with finding $\boldsymbol{\hat \Sigma_{z}}$'s, $\boldsymbol{Z_0}$'s and $\boldsymbol{Z_n}$'s for forest images and $\boldsymbol{Z_n}$'s for non-forest ones in formulas (\ref{eq:par_mult_hyp_test}). After that, training proceeds as in \ref{subsc:nonpar_train_cv}: T statistics for pairs of images are computed and the threshold is set to minimise the misclassification rate. Throughout classification of new images, $\boldsymbol{\hat \Sigma_{z}}$'s and $\boldsymbol{Z_0}$'s corresponding to training forest images will be available in advance, so that only $\boldsymbol{Z_n}$'s of the new images will be left to compute to perform hypothesis testing based on (\ref{eq:par_mult_hyp_test}). For that reason we do not build the parametric framework on Theorem \ref{thm:param_test} b) with $\boldsymbol{\hat \Sigma_{z}}$'s obtained from both the test image and a training forest image: estimation of parameters of stable distribution is quite computationally demanding. Although, this approach would provide better accuracy.

Cross-validation for the parametric model is summarised in Algorithm \ref{alg:p}.

\begin{algorithm}[H]
	\caption{Parametric cross validation procedure
		\label{alg:p}}
	\begin{algorithmic}[1]
		\State \textbf{Inputs:} Data set of images labelled forest or non-forest, number of folds $k$, max threshold $T_{max}$, parameter for CF's ($t$ in (\ref{stable def2})).
		\State For every forest image compute $Z_0$, parameters and the covariance matrix.
		\State For every non-forest image compute $Z_n$.
		\State Randomly shuffle images and group them into $k$ folds.
		\For{$i =1$ to $k$}
		\For{each image in fold $i$}    
		\State \parbox[t]{313pt}{Compute 3 $\hat T$'s from (\ref{eq:par_mult_hyp_test}), between the image and all images labelled forest in the remaining $k-1$ folds.\strut}
		\State Keep only the smallest $\hat T$ for the image.
		\EndFor
		\EndFor
		\For{$t$ in $0$ to $T_{max}$}
		\State \parbox[t]{313pt}{For all images and all $\hat T$'s, tag images with $\hat T<t$ as forest and $\hat T>t$ as non forest. \strut}
		\State Compute the accuracy for the given threshold $t,$ $\text{Acc}(t).$ 
		\EndFor
		\State \Return $\text{arg}\max_{t}\text{Acc}(t)$ and $Z_0$'s, parameters and the covariance matrices of forest images.
	\end{algorithmic}
\end{algorithm}

\section{Comparison of the methods}
\label{Sec:TheorCompar}
There are features typical for both of our methods. First, if by accident there is a non-forest image in the training forest set, test images of its type start to be tagged as forest because they pass the corresponding test. As a consequence, additional false positive results will occur. Specially designed cross-validation may be exploited to eliminate this problem. Second, the setting is multiple hypothesis testing, so it could naturally be re-set to classify forests images into multiple types such as rain forest, coniferous forest, etc. Another thing is that clouds and sunbeam scattering affect the algorithms because they introduce noise to the distributions of objects. It could be cured by atmospheric correction (\cite{sen2cor17}). Also, both models increase accuracy with growth of the numbers of points in the training data and test data. Improvement of prediction accuracy could be achieved in three ways. Increasing resolution, e.g. by switching from satellite data to aerial imagery, would add more pixels to objects under consideration while adding new tagged images to the training set would provide better training of thresholds and more tagged classes. In practice, merely increasing the size of an image with fixed resolution (adding more area) works well only in the case of training pictures: enlarging test images would imply observing pictures with objects belonging to different classes and tagging some areas incorrectly because of that. And lastly, unlike classical machine learning methods, the models developed in the present article are robust to data transformations such as rotation and reflection.

There is also a number of differences between the two models. An advantage of the parametric model is that it can work on incomplete data: e.g. when only 1 channel is available or different channels are available at different times. The non-parametric method needs at least two channels simultaneously. On the other hand, the disadvantage of the parametric model is that the algorithm generally requires more pixels per image in training as it involves parameter estimation and because we chose to use Theorem \ref{thm:param_test} a). The estimation procedure also makes the parametric method slower than the other one. A con of the non-parametric method is that the model does not take into account the actual distributions of colour intensities, which can contribute to additional false positives.

To summarise the comparison, we point out that at the current stage of research it is not possible to claim superiority of one method over the other as they both have pros and cons.

\section{Empirical application}
\label{sec:emp_app}
Sentinel-2 satellite images are used to illustrate how the new Mahalanobis distance classifier (MDC) and  stable distribution classifier (SDC) perform in comparison to some other commonly used classification methods. The methods used for comparison are support vector machine (SVM) and convolutional neural network (CNN). Both of the comparison methods have been widely used for classification, particularly in the setting of vegetation remote sensing \citep{kattenborn2021review, maulik2017remote} .

Three experiments with real data are shown here: two comparing accuracy of the methods and one giving visual representation of performance. As the source of terrain images, the region Occitanie in France is used, more precisely, vicinity of the city of Albi.

In the first two experiments, we look at how the methods perform using a training and test split of the available dataset. A training set consisting of 510 images of sizes $20 \times 20$ pixels with an equal number forest and non-forest images is used in the first experiment. The test set is also equally balanced between forest and non-forest images but it consists of 200 images of sizes $20 \times 20$ pixels. The second experiment is similar, except that each $20 \times 20$ image is split into four equal pieces, making 2040 images of sizes $10 \times 10$ pixels in the training set and 1000 $10 \times 10$ pixels images in the test set. 

In the third experiment, we show how the methods behave when every sub-image of size $10 \times 10$ pixels is classified of an image of size $2000 \times 2000$ pixels. This makes it possible to see how the methods behave for different terrains and get a visual interpretation of the results. 

In each of the experiments, the SVM is implemented with the radial basis function kernel which is a commonly used kernel for classification \cite{chang2010training}. The feature space consists of all pixels and their RGB colour values in the form of flatten input. The Python library 'sklearn' and its default settings are used for the implementation. 

The CNN is implemented with 'tensorflow' in Python and the network consists of 2 convolution layers with 8 filters in each layer and a kernel size of 3. ReLu is used as activation function in the convolution layers together with maximum pooling of size $2 \times 2$. A dense layer with sigmoid as activation function is used in the final layer of the network. The loss function is set to be 'binary\_crossentropy' and 'Adam' is the solver of choice. It should be mentioned that we tried and assessed CNNs with all possible combinations of \{1,2,3\} convolution layers, \{8,12,16\} filters, \{2,3\} kernel size, \{0, 0.05, 0.1\} dropout. The one described above was the best performing in terms of accuracy.

The performance measure used to evaluate the methods in the first two experiments is the error rate, i.e., the number of incorrectly classified images in the test set divided by the total number of test images. Figure \ref{fig:all_results} illustrates how the methods perform when the training set varies between 10 \% and 100 \% of the total training set using images of sizes $20 \times 20$ pixels and $10 \times 10$ pixels, respectively. Even though the training set size varies, an equal number of forest and non-forest images are always used for training and the test set is kept unchanged. It should also be noted that all of the images are collected from the same region and under similar conditions. This means that the models are not trained for different seasons or different types of forests. However, it would be possible to make the classification models more general by including more diverse data.

\begin{figure}[h]
	\subfloat[$20 \times 20$ pixels]{\includegraphics[width= 0.49\textwidth]{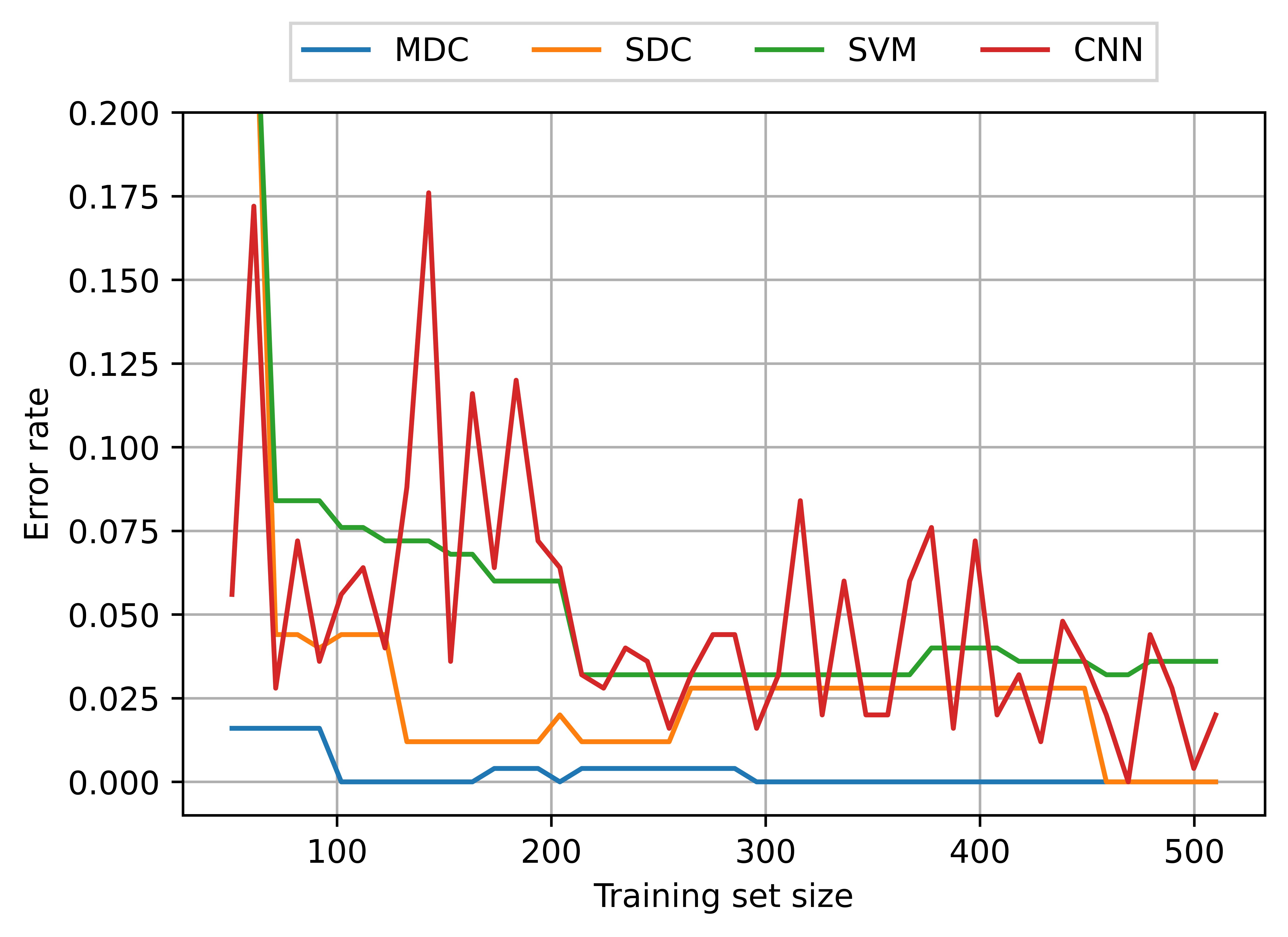}}
	\label{fig:all_results_M20_N20}
	\subfloat[$10 \times 10$ pixels]{\includegraphics[width= 0.49\textwidth]{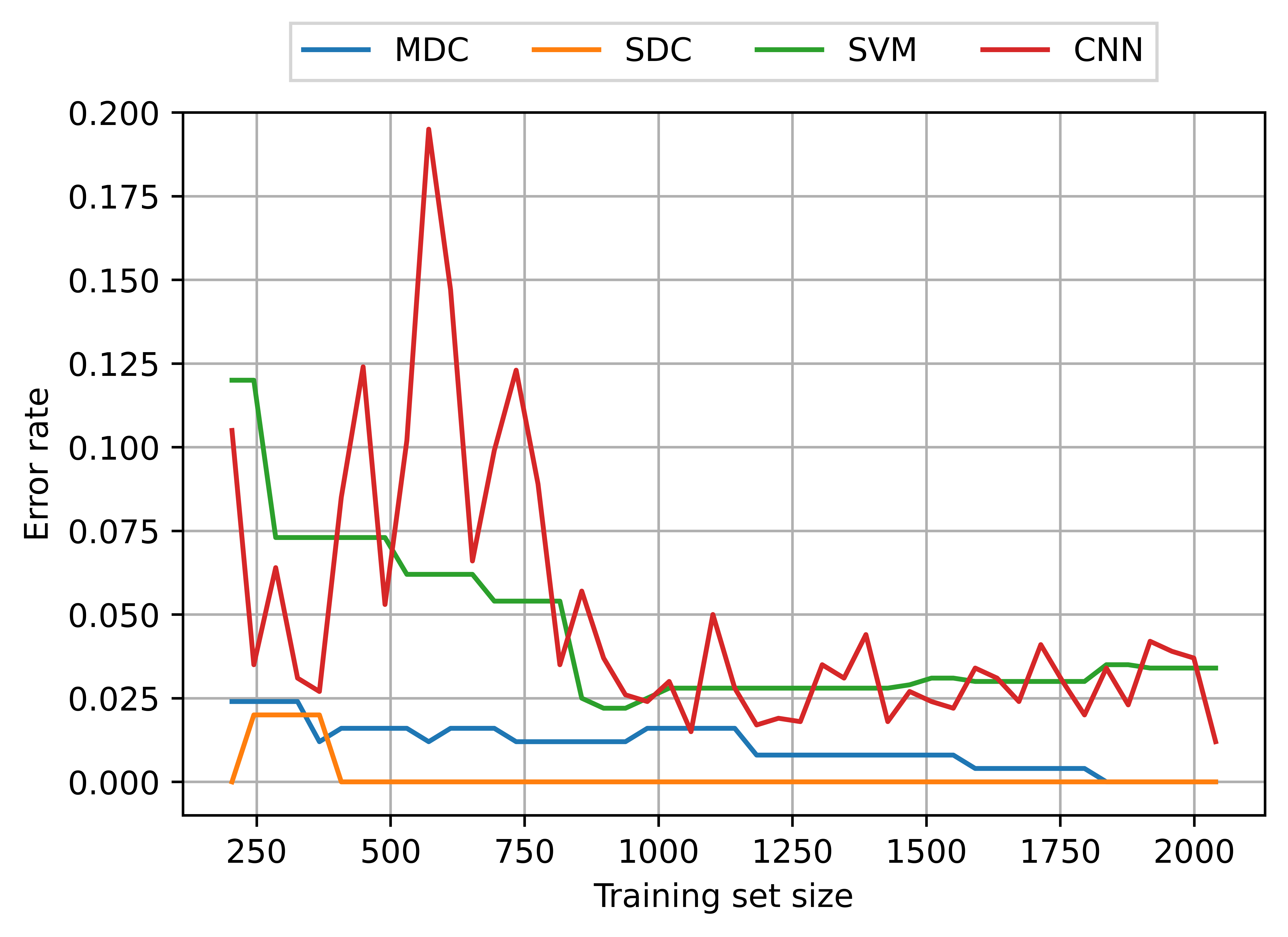}}	
	\label{fig:all_results_M10_N10}
	\caption{Forest classification error rates depending on the training set size using MDC, SDC, SVM and CNN for different image sizes.}
	\label{fig:all_results}
\end{figure}

Figure \ref{fig:all_results} shows that the new methods perform better than both of the comparison methods. In the $20 \times 20$ pixels case, the MDC generally has a lower error rate than the SDC. The error rate corresponding to the MDC drops to zero when the size of the training set is about 300 whereas it takes until about 450 for the SDC. The SVM and CNN perform quite similarly, with possibly a slight advantage to the CNN. However, the SVM is more robust to changes in performance when adding more data to the training set. Neither of the comparison methods tends to have an error rate below 1 \%. Analogous results are presented in the $10 \times 10$ pixels case. However, in this experiment the SDC usually has a lower error rate than the MDC. The error rate drops to zero almost immediately when using the SDC whereas the error rate corresponding to the MDC gradually decreases from 2.4 \% and reaches zero when the training set size is about 1800. Again, the SVM and CNN show similar (worse) performance.

Next we turn to the third experiment and visually investigate the methods by classifying $10 \times 10$ pixels sub-images of a $2000 \times 2000$ pixels image. The same (complete) training set as in the previous experiments is used to calibrate the methods. This training set does not contain images that are part of the large image but they were collected from a nearby region under similar conditions. Sub-images that are classified as forest are marked as white and non-forest sub-images are marked as black. The result of the classification is illustrated in Figure \ref{fig:black_white}.

\begin{figure}[H]
\begin{minipage}{.27\textwidth}
\begin{subfigure}{\textwidth}
\includegraphics[width=\textwidth]{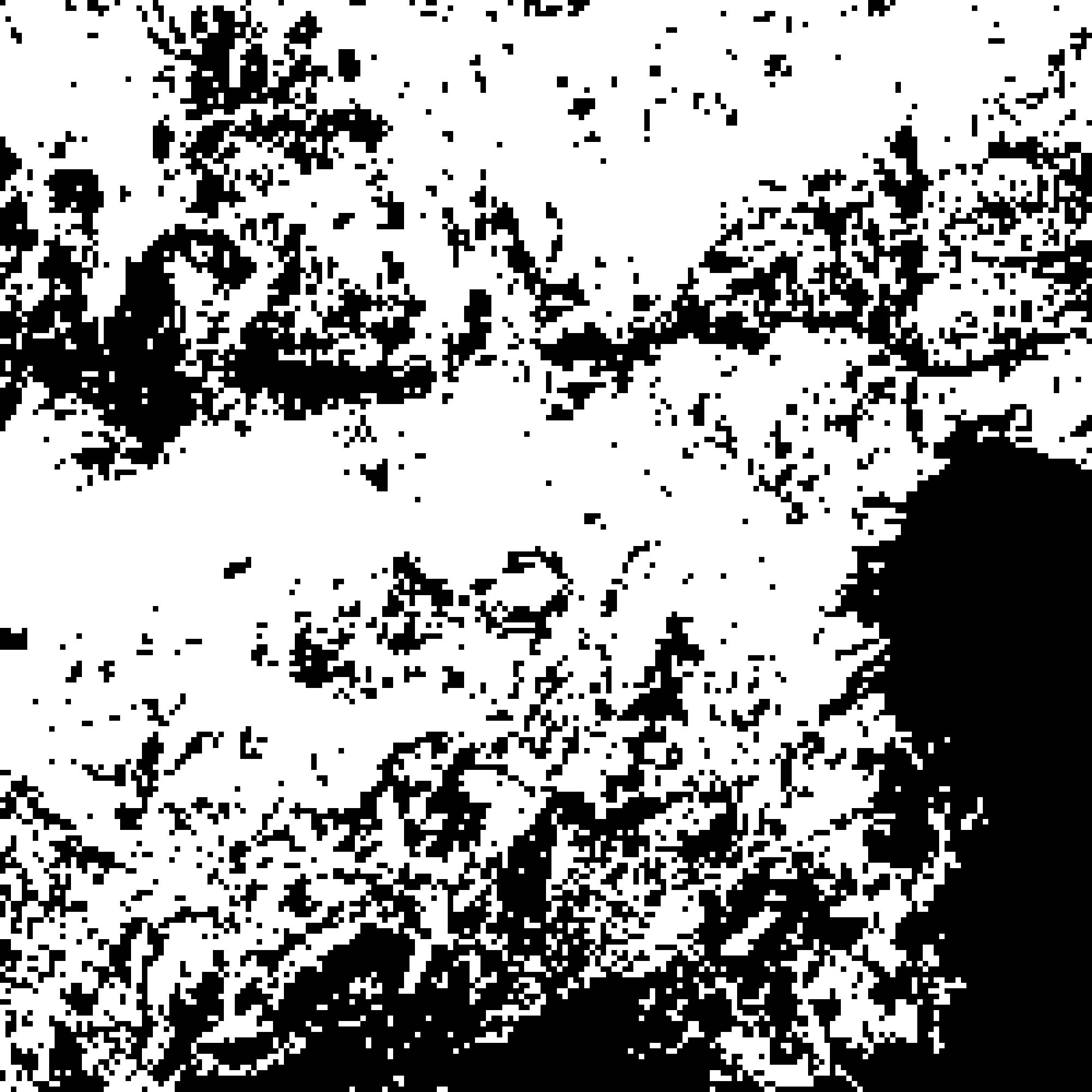}
\caption{MDC}
\end{subfigure}
\begin{subfigure}{\textwidth}
\includegraphics[width=\textwidth]{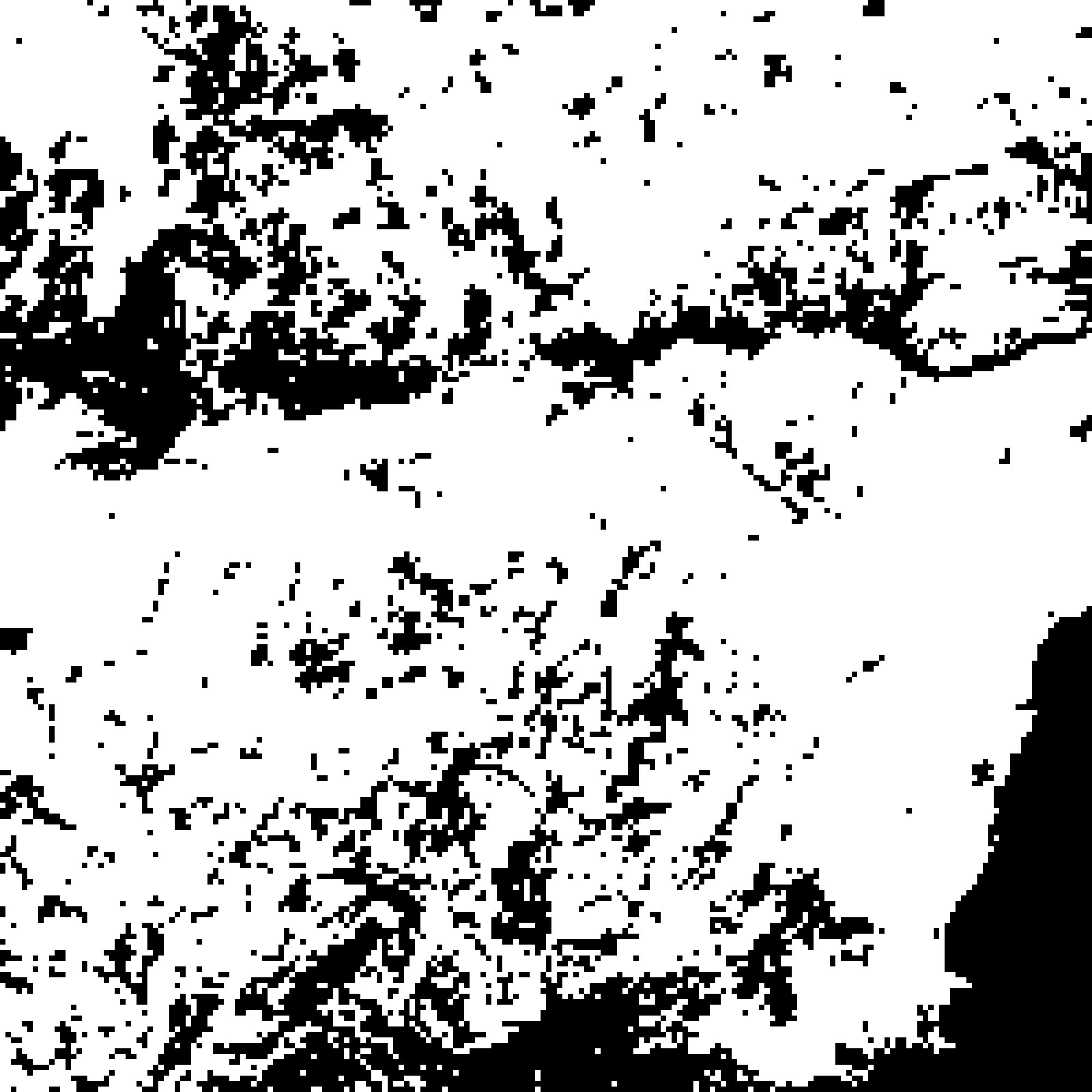}
\caption{SVM}
\end{subfigure}
\end{minipage}
\hfill
\begin{minipage}{.27\textwidth}
\begin{subfigure}{\textwidth}
\includegraphics[width=\textwidth]{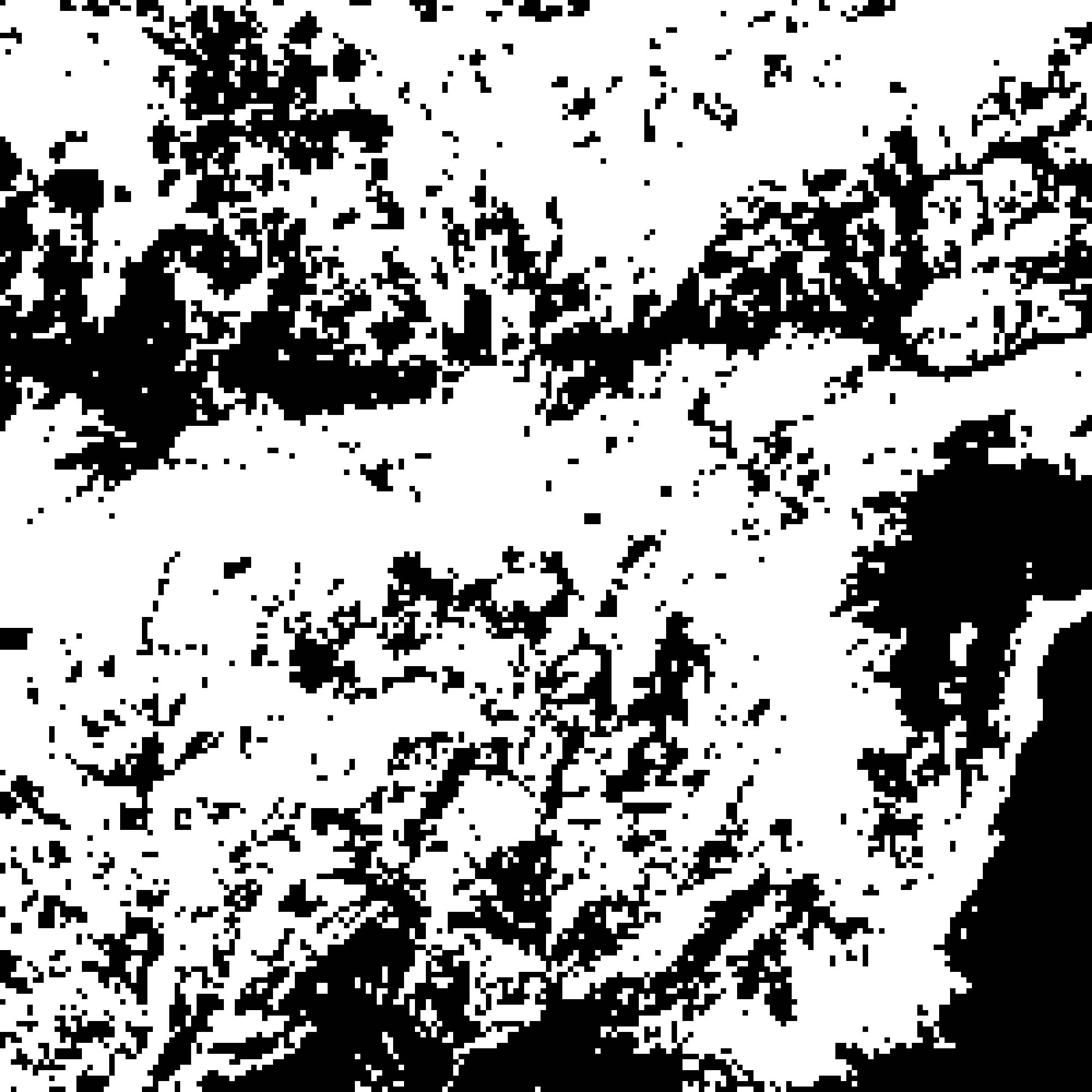}
\caption{SDC}
\end{subfigure}
\begin{subfigure}{\textwidth}
\includegraphics[width=\textwidth]{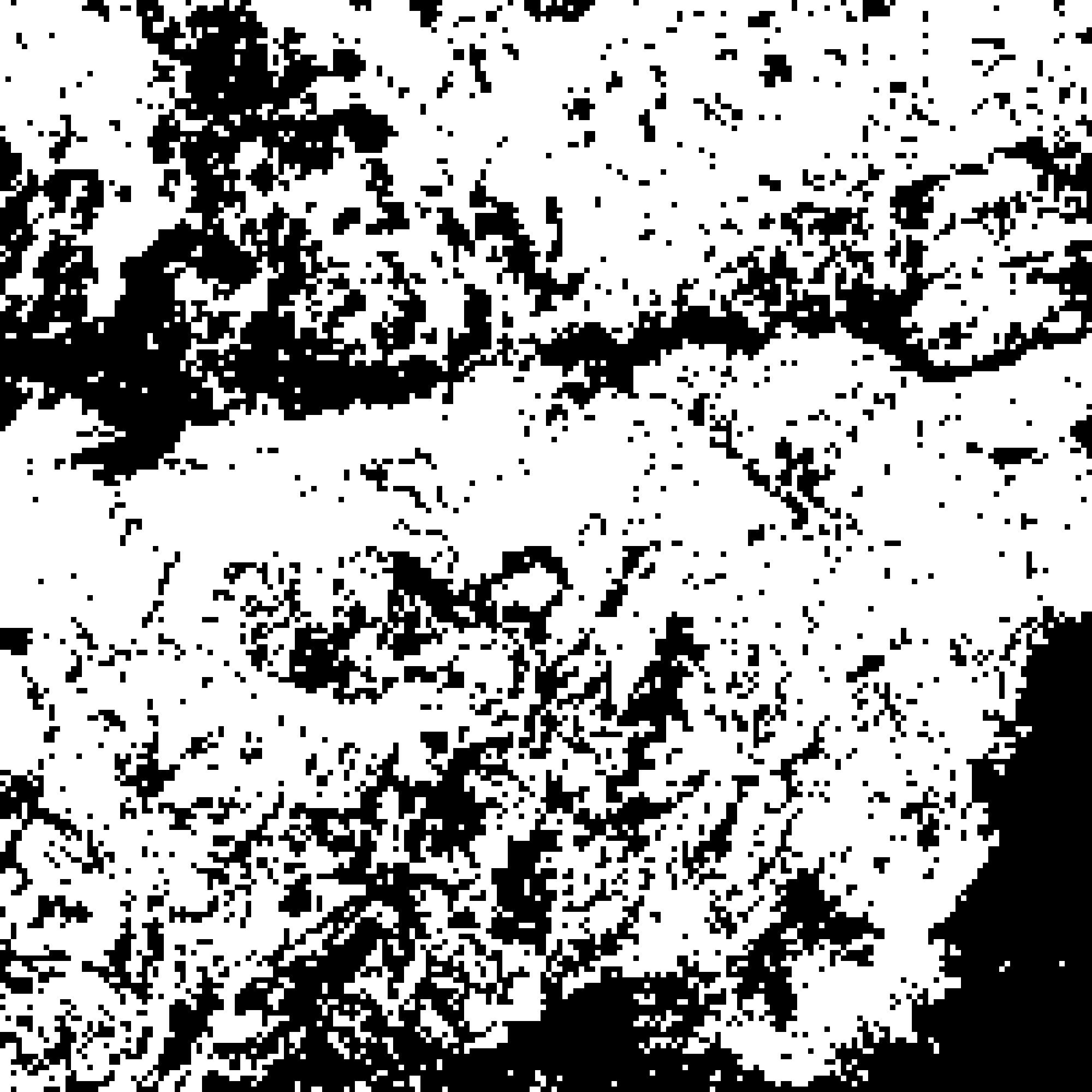}
\caption{CNN}
\end{subfigure}
\end{minipage}
\hfill
\begin{minipage}{.4\textwidth}
\begin{subfigure}{\textwidth}
\includegraphics[width=\textwidth]{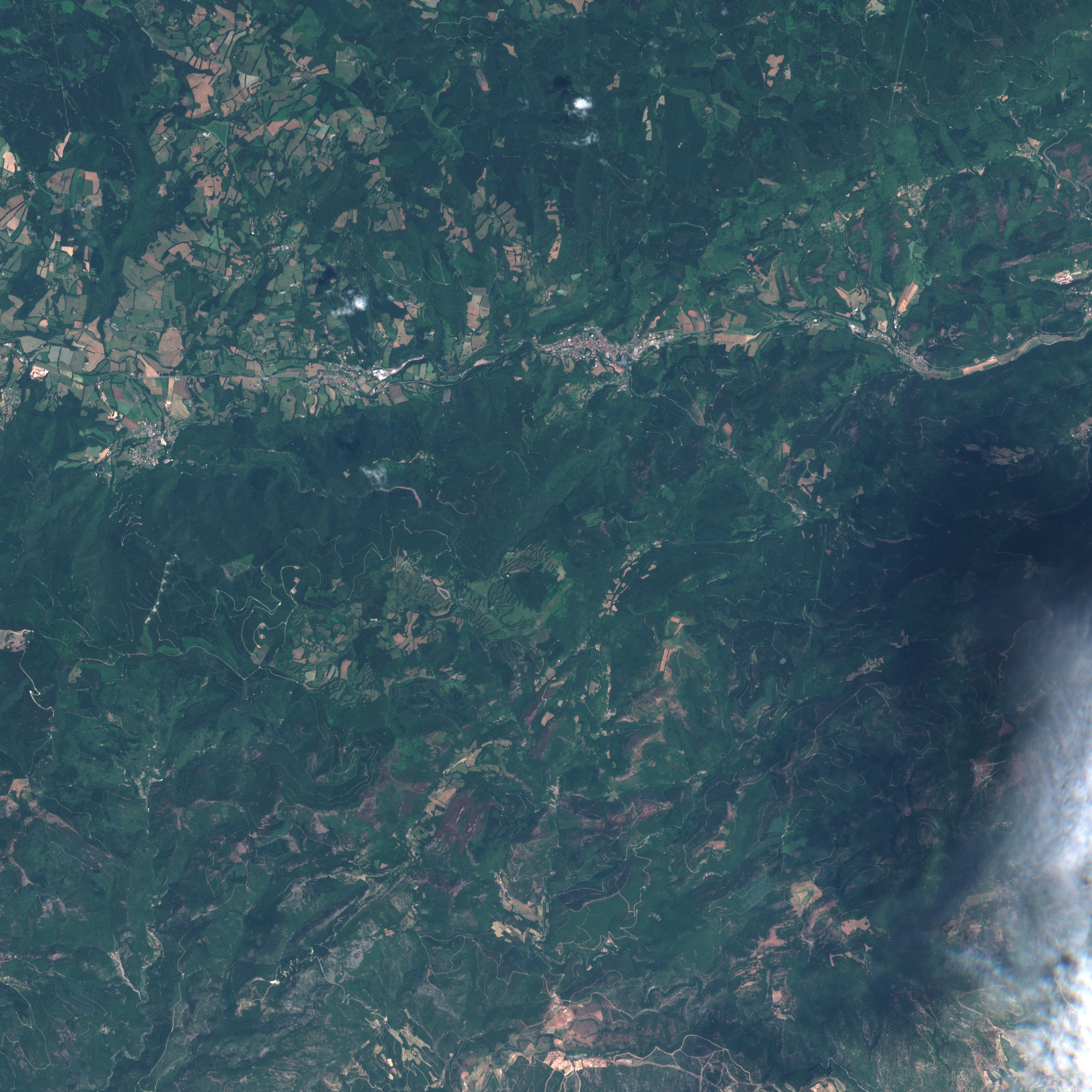}
\subcaption{Original}
\end{subfigure}
\end{minipage}
\caption{Forest classification using MDC, SDC, SVM and CNN of an $2000 \times 2000$ pixels image using sub-images of sizes $10 \times 10$ pixels. White corresponds to forest and black corresponds to non-forest.}
\label{fig:black_white}
\end{figure}

Generally, all four models performed well, though there are differences in their outcomes. One can see that unlike MDC and SDC, SVM and CNN are not so sensitive to the intensity of the green colour. It is best illustrated by performances of the models on forest covered by a shadow of the cloud in the right part of Figure \ref{fig:black_white}. MDC and SDC incorrectly treat the shadowed vegetation as non-forest, while the other two tag it as forest. Shades from clouds generally could be accounted for by cloud detection, which is a solved problem in Earth image processing (see e.g. \cite{sen2cor17}). 


\section{Conclusion}
Forest images can be successfully modeled using alpha-stable distributions. This fact was used to create two new frequentist approaches to forest/non-forest classification. In the experiments with real data our methods outperform the benchmark machine learning ones, which indicates superiority of the former methods on small data. The same was not checked for big data and we note that frequentist techniques tend to be inferior to machine learning in such cases.

The new algorithms are very sensitive to colour intensities, which means that the training data must be obtained under the same conditions as the data for classification. Given this, on a dataset of images collected from different sources under different conditions, machine learning algorithms have an advantage.

Despite the fact that the new algorithms are used for 0-1 classification, they are suited for multi-type classification tasks such as detecting different types of forest.


\bibliography{bibliography}


\begin{appendices}
\section{Data pre-processing technology}
\label{sec:Data_pre-processing}
We have built a preprocessing procedure of creating RGB composites from Sentinel satellite images using Python programming language.

First, we download Sentinel 2 Level-1C products from PEPS (\cite{PEPS}) with the means of \href{https://eodag.readthedocs.io/en/stable/}{EODAG}. The PEPS (Plateforme d’Exploitation des Produits Sentinel) platform redistributes the products of Sentinel satellites, S1A, S1B, S2A and S2B, S3A and S3B from COPERNICUS, the European system for Earth monitoring. EODAG (Earth Observation Data Access Gateway) is a command line tool and a Python package for searching and downloading remotely sensed images while offering a unified API for data access regardless of the data provider.

Second, for each product we do a preprocessing procedure of creating RGB composites from Sentinel 2 Level-1C products. Using \href{https://rasterio.readthedocs.io/en/latest/}{rasterio} package we extract red, green and blue bands from channels B04, B03 and B02 respectively and combine them in a single file using \href{https://gdal.org/drivers/raster/gtiff.html\#raster-gtiff}{GeoTIFF} file format. 

Third, we compress RGB-composites with lossless compression algorithm to decrease the amount of storage space needed for data using gdat\_translate utility (\cite{gdal}).

Lastly, in order to obtain images with sufficient brightness, we normalize values in the channels to range from 0.0 to 1.0 by dividing them by the constant 2000. We truncate values with normalised intensity more than 1.0 by setting them to 1.0, which primarily affects bright white clouds and conforms with typical "crop and scale" image processing procedure. Deforestation detection algorithms use these normalised values.

\section{Distributions of colour intensities of other natural objects}
\label{sec:EDA_adds}
In this section we explore two RGB images: one of a mountain and another one---of sea. Each picture is accompanied by empirical density functions plotted for all channels. For comparison, best fitting stable, gamma and normal distributions are shown.
\begin{figure}[h!]
	\subfloat[Mountain image analysed]{\includegraphics[width= 0.48\textwidth]{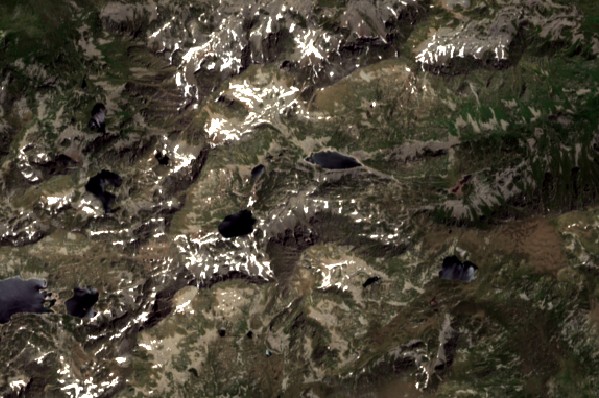}}
	\subfloat[Comparison of densities for red color intensity]{\includegraphics[width= 0.48\textwidth]{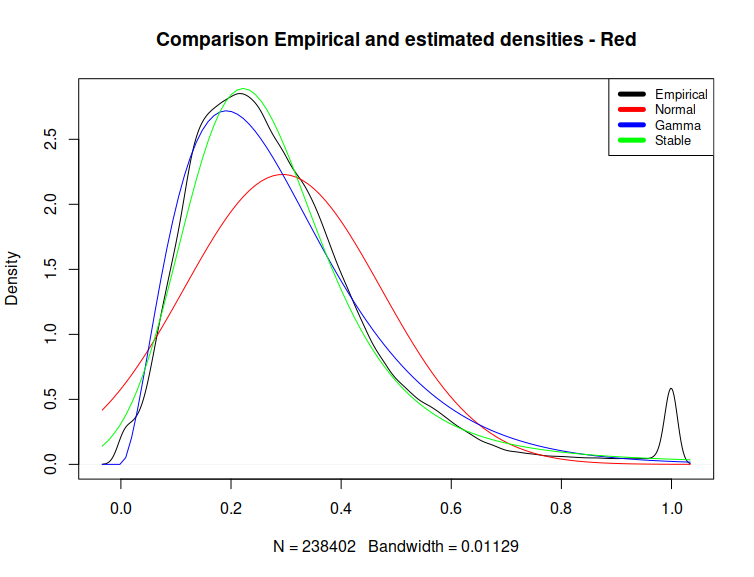}}\\
	\subfloat[Comparison of densities for green color intensity]{\includegraphics[width=0.48\textwidth]{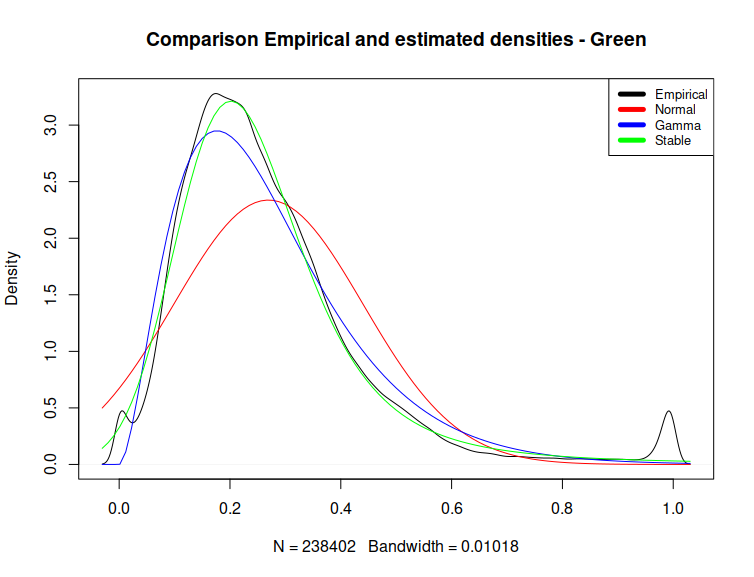}}
	\subfloat[Comparison of densities for blue color intensity]{\includegraphics[width= 0.48\textwidth]{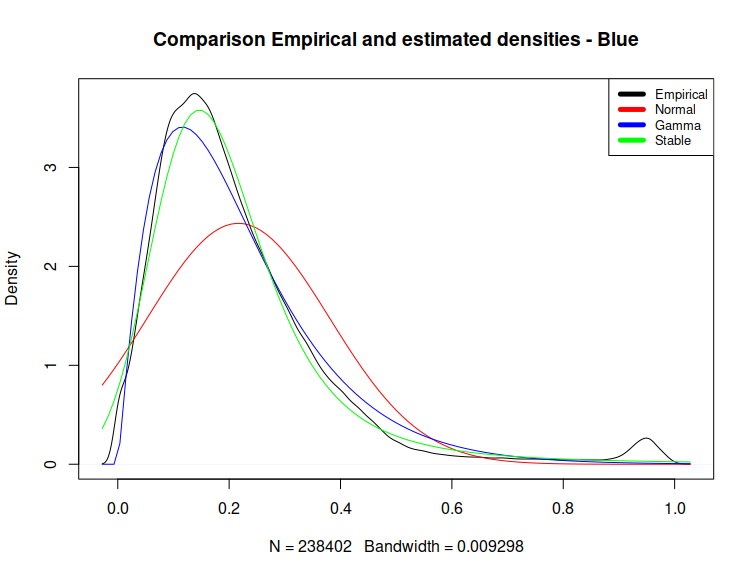}}
	
	\caption{Plots of empirical and estimated probability density functions for distribution of colour intensities in an image of mountain.}
	\label{fig:Mountain}
\end{figure}

\begin{figure}[H]
	\subfloat[Sea image analysed]{\includegraphics[width= 0.48\textwidth]{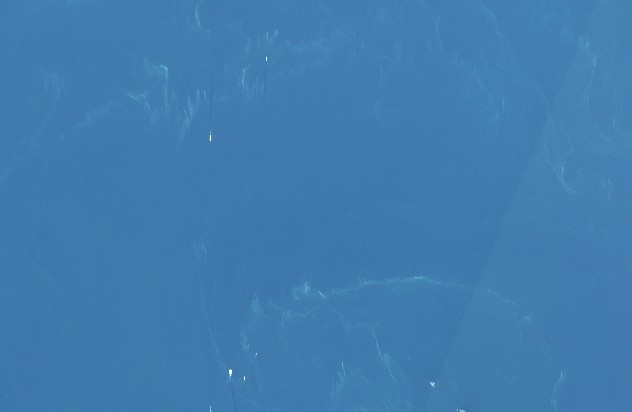}}
	\subfloat[Comparison of densities for red color intensity]{\includegraphics[width= 0.48\textwidth]{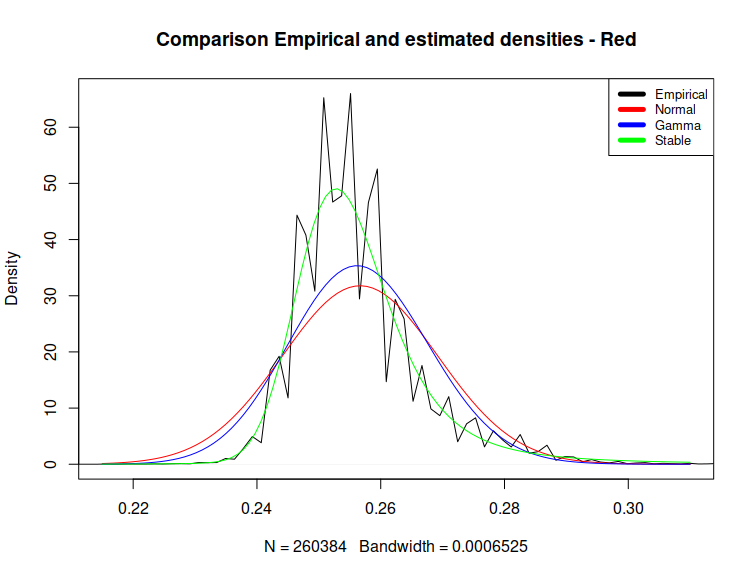}}\\
	\subfloat[Comparison of densities for green color intensity]{\includegraphics[width=0.48\textwidth]{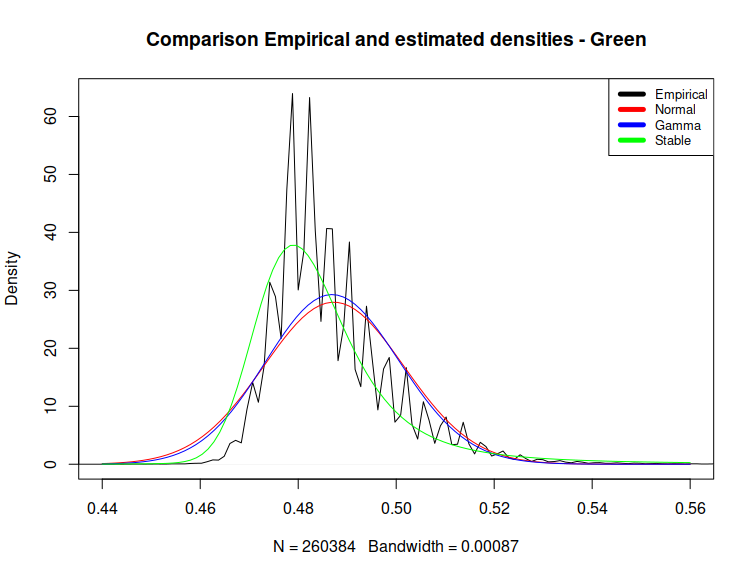}}
	\subfloat[Comparison of densities for blue color intensity]{\includegraphics[width= 0.48\textwidth]{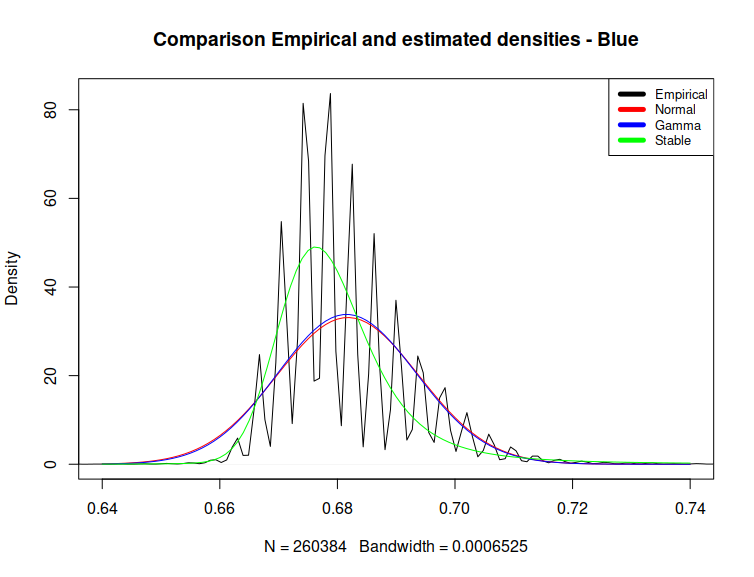}}
	
	\caption{Plots of empirical and estimated probability density functions for distribution of colour intensities in an image of sea.}
	\label{fig:Sea}
\end{figure}

\section{Proofs}
\label{sec:Proofs}
\begin{proof}[Proof of Theorem \ref{thm:NonPar_test}.]
	By the multidimensional central limit theorem
	\begin{equation}
	(\boldsymbol{\bar X_1} - \boldsymbol{\bar X_2}) \xrightarrow{d} \mathcal{N}(\boldsymbol{0}, \gamma \boldsymbol{\Sigma}), \quad \text{where } \gamma=n_1^{-1}  + n_2^{-1}.
	\end{equation}
	After introducing $\boldsymbol{A} := (\gamma \boldsymbol{\Sigma})^{1/2}$ and $\boldsymbol{\hat A} := (\gamma \boldsymbol{\hat \Sigma})^{1/2}$ we can write
	\begin{equation}
	\label{eq:Thm1_X_diff_converg}
	(\boldsymbol{\bar X_1} - \boldsymbol{\bar X_2}) \xrightarrow{d} \boldsymbol{A} \boldsymbol{Z}, \quad \text{where } \boldsymbol{Z} \sim \mathcal{N}(\boldsymbol{0}, \boldsymbol{I}_p).     
	\end{equation}
	Taking into account (\ref{eq:Thm1_X_diff_converg}) and that $\boldsymbol{\hat A}^{-1} \xrightarrow{P} \boldsymbol{A}^{-1}$, where the later is a matrix of constants, by a property of weak convergence
	\begin{equation}
	\label{eq:joint_converg}
	\left[\boldsymbol{\hat A}^{-1}, (\boldsymbol{\bar X_1} - \boldsymbol{\bar X_2})\right]
	\xrightarrow{d}  
	\left[\boldsymbol{A}^{-1}, \boldsymbol{AZ} \right].
	\end{equation}
	Then, we introduce a new quantity
	\begin{equation}
	\label{eq:Q_def}
	\begin{split}
	\boldsymbol{Q}:=
	\left[\boldsymbol{\hat A}^{-1} (\boldsymbol{\bar X_1} - \boldsymbol{\bar X_2})\right]^T 
	\left[\boldsymbol{\hat A}^{-1} (\boldsymbol{\bar X_1} - \boldsymbol{\bar X_2})\right].
	\end{split}
	\end{equation}
	The multivariate continuous mapping theorem applied to (\ref{eq:joint_converg}) implies $\boldsymbol{Q} \xrightarrow{d} \boldsymbol{Z}^T \boldsymbol{Z} \stackrel{d}{=} \chi^2(p)$.
	On the other hand,
	\begin{equation}
	\boldsymbol{Q} = (\boldsymbol{\bar X_1} - \boldsymbol{\bar X_2})^T (\boldsymbol{\hat A}^{-1})^T \boldsymbol{\hat A}^{-1} (\boldsymbol{\bar X_1} - \boldsymbol{\bar X_2}),
	\end{equation}
	and using the following chain of transformations 
	\begin{equation}
	(\boldsymbol{\hat A}^{-1})^T \boldsymbol{\hat A}^{-1} = (\boldsymbol{\hat A}^T)^{-1} \boldsymbol{\hat A}^{-1} = (\boldsymbol{\hat A}^T \boldsymbol{\hat A})^{-1} = (\gamma \boldsymbol{\hat \Sigma})^{-1}
	\end{equation}
	we obtain 
	\begin{equation}
	\label{eq:D_sq_fin_form}
	\boldsymbol{Q} = (\boldsymbol{\bar X_1} - \boldsymbol{\bar X_2})^T (\gamma \boldsymbol{\hat \Sigma})^{-1} (\boldsymbol{\bar X_1} - \boldsymbol{\bar X_2}).
	\end{equation}
\end{proof}

\begin{proof}[Proof of proposition \ref{prop:z_n}]
	First, we write  
	\begin{equation}
	\begin{split}
	u_j := \cos(tX_j) = \frac{1}{2} (e^{itX_j} + e^{-itX_j}) \\
	v_j := \sin(tX_j) = \frac{1}{2i} (e^{itX_j} - e^{-itX_j})
	\end{split}
	\end{equation}
	Since both of them are bounded, then $[u_j(t), v_j(t)]^T$ is a random vector with components having the second moment. Thus, the multidimensional central limit theorem immediately applies to this vector. Therefore, it is only left to find the entries of the matrix $\boldsymbol{\Sigma_z}$ (\ref{mtrx:Sigma_z}).
	\begin{equation}
	\label{eq:cov_uv}
	\begin{split}
	\operatorname{Cov}(u_j,v_j)=\operatorname{Cov}\left[ \frac{1}{2}(e^{itX_j} + e^{-itX_j}), \frac{1}{2i} (e^{itX_j} - e^{-itX_j}) \right] &= \\
	\frac{1}{4i} \left[ \operatorname{Cov}(e^{itX_j}, e^{itX_j}) -  \operatorname{Cov}(e^{itX_j}, e^{-itX_j}) + \operatorname{Cov}(e^{itX_j}, e^{-itX_j}) - \operatorname{Cov}(e^{-itX_j}, e^{-itX_j})\right] &= \\
	\frac{1}{4i} \left[ \operatorname{Cov}(e^{itX_j}, e^{itX_j}) - \operatorname{Cov}(e^{-itX_j}, e^{-itX_j})\right]&
	\end{split}
	\end{equation}
	\begin{equation}
	\label{eq:vars_exps}
	\begin{split}
	\operatorname{Var}\left(e^{itX_j}\right) &=\mathrm{E}\left[e^{2itX_j}\right] - \left(\mathrm{E}\left[e^{itX_j}\right]\right)^2 = \phi(2t) - [\phi(t)]^2 \\
	\operatorname{Var}\left(e^{-itX_j}\right)&= \phi(-2t) - [\phi(-t)]^2
	\end{split}
	\end{equation}
	Having substituted covariances in (\ref{eq:cov_uv}) using (\ref{eq:vars_exps}) one has
	\begin{equation}
	\begin{split}
	\operatorname{Cov}(u_j,v_j)= \frac{1}{4i} \left[ \phi(2t) - [\phi(t)]^2 - \phi(-2t) + [\phi(-t)]^2 \right] \\
	\end{split}
	\end{equation}
	Other elements of the covariance matrix $\operatorname{Var}(u_j)$ and $\operatorname{Var}(v_j)$ are computed in the same manner:
	\begin{equation}
	\label{eq:var_u}
	\begin{split}
	\operatorname{Var}(u_j) = \mathrm{E}[u_j^2] - \left(\mathrm{E}[u_j]\right)^2 & = \\
	\frac{1}{4} \mathrm{E} \left( e^{2itX_j} + 2e^0 + e^{-2itX_j} \right) - \frac{1}{4} \left[\mathrm{E} \left( e^{itX_j} + e^{-itX_j} \right)\right]^2 &= \\
	\frac{1}{4} \left[ \phi(2t) + 2 + \phi(-2t) - (\phi(t))^2 -2\phi(t)\phi(-t) - (\phi(-t))^2 \right]&
	\end{split}
	\end{equation}
	\begin{equation}
	\label{eq:var_v}
	\begin{split}
	\operatorname{Var}(v_j) = \mathrm{E}[v_j^2] - \left(\mathrm{E}[v_j]\right)^2 & = \\
	- \frac{1}{4} \mathrm{E}\left[e^{2itX_j} - 2e^0 + e^{-2itX_j}\right] - \left(\frac{1}{2i}\mathrm{E}\left[e^{itX_j} - e^{-itX_j}\right]\right)^2 & = \\
	-\frac{1}{4} \left[ \phi(2t) - 2 + \phi(-2t)\right] - \frac{-1}{4}\left[ \phi(t) - \phi(-t) \right]^2 & = \\
	\frac{1}{4} \left[ [\phi(t)]^2 - 2\phi(t)\phi(-t) +\phi(-t)^2 \right] - \frac{1}{4}[\phi(2t) + \phi(-2t) - 2]&
	\end{split}
	\end{equation}
\end{proof}

\begin{proof}[Proof of Theorem \ref{thm:param_test}]
	a) According to Proposition \ref{prop:z_n} and properties of weak convergence 
	\begin{equation}
	\left(\frac{1}{n}\boldsymbol{\hat \Sigma_{z}}\right)^{-1/2} (\boldsymbol{Z_n} - \boldsymbol{Z_0}) \xrightarrow{d} \mathcal{N}(\boldsymbol{0},\boldsymbol{I}).
	\end{equation}	
	Hence, by continuous mapping theorem,
	\begin{equation}
	\begin{split}
	\left[\left(\frac{1}{n}\boldsymbol{\hat \Sigma_{z}}\right)^{-1/2} (\boldsymbol{Z_n} - \boldsymbol{Z_0})\right]^T  
	\left[\left(\frac{1}{n}\boldsymbol{\hat \Sigma_{z}}\right)^{-1/2} (\boldsymbol{Z_n} - \boldsymbol{Z_0})\right]= \\
	n(\boldsymbol{Z_n} - \boldsymbol{Z_0})^T \left(\boldsymbol{\hat \Sigma_{z}}^{-1/2}\right)^T \boldsymbol{\hat \Sigma_{z}}^{-1/2} (\boldsymbol{Z_n} - \boldsymbol{Z_0})
	\xrightarrow{d} \mathcal{\chi}^2(2)
	\end{split}    
	\end{equation}   
	b) follows from independence of $\boldsymbol{Z^{(1)}_n}$ and $\boldsymbol{Z^{(2)}_n}$.
\end{proof}

\end{appendices}
\end{document}